\def\eqref#1{equation~\ref{#1}}
\def\1{\bm{1}}
\def\vtheta{{\bm{\theta}}}
\def\vtheta{{\bm{\theta}}}
\def\va{{\bm{a}}}
\def\vb{{\bm{b}}}
\def\vs{{\bm{s}}}
\def\vt{{\bm{t}}}
\def\vx{{\bm{x}}}
\def\m0{{\bm{0}}}
\def\mTheta{{\bm{\Theta}}}
\def\mA{{\bm{A}}}
\def\mB{{\bm{B}}}
\def\mH{{\bm{H}}}
\def\mO{{\bm{O}}}
\def\mU{{\bm{U}}}
\def\mV{{\bm{V}}}
\def\mW{{\bm{W}}}
\def\mX{{\bm{X}}}
\DeclareMathAlphabet{\mathsfit}{\encodingdefault}{\sfdefault}{m}{sl}
\SetMathAlphabet{\mathsfit}{bold}{\encodingdefault}{\sfdefault}{bx}{n}
\def\gE{{\mathcal{E}}}
\def\gG{{\mathcal{G}}}
\def\gL{{\mathcal{L}}}
\def\gN{{\mathcal{N}}}
\def\gR{{\mathcal{R}}}
\def\gV{{\mathcal{V}}}
\def\sR{{\mathbb{R}}}
\theoremstyle{plain}
\newtheorem{theorem}{Theorem}[section]
\newtheorem{lemma}[theorem]{Lemma}
\theoremstyle{definition}
\theoremstyle{remark}
\renewcommand{\eqref}[1]{(\ref{#1})}%
\newcommand{\ovset}[3][0ex]{%
  \mathrel{\mathop{#3}\limits^{
    \vbox to#1{\kern-0.05\ex@
    \hbox{$\scriptstyle#2$}\vss}}}}
\icmltitlerunning{DUPLEX: Dual GAT for Complex Embedding of Directed Graphs}
\begin{document}

\twocolumn[
\icmltitle{DUPLEX: Dual GAT for Complex Embedding of Directed Graphs}

\icmlsetsymbol{equal}{*}

\begin{icmlauthorlist}
\icmlauthor{Zhaoru Ke}{equal,sht,ant}
\icmlauthor{Hang Yu}{equal,ant}
\icmlauthor{Jianguo Li}{ant}
\icmlauthor{Haipeng Zhang}{sht}
\end{icmlauthorlist}

\icmlaffiliation{sht}{School of Information Science and Technology, ShanghaiTech University, China}
\icmlaffiliation{ant}{Ant Group, China}

\icmlcorrespondingauthor{Haipeng Zhang}{zhanghp@shanghaitech.edu.cn}
\icmlcorrespondingauthor{Jianguo Li}{lijg.zero@antgroup.com}

\icmlkeywords{Directed graph, Graph embedding, Complex embedding, Graph neural network, Graph attention network}

\vskip 0.3in
]

\printAffiliationsAndNotice{\icmlEqualContribution} 

\begin{abstract}
Current directed graph embedding methods build upon undirected techniques but often inadequately capture directed edge information, leading to challenges such as: (1) \textbf{Suboptimal representations for nodes with low in/out-degrees}, due to the insufficient neighbor interactions; (2) \textbf{Limited inductive ability} for representing new nodes post-training; (3) \textbf{Narrow generalizability}, as training is overly coupled with specific tasks. In response, we propose DUPLEX, an inductive framework for complex embeddings of directed graphs. It (1) leverages Hermitian adjacency matrix decomposition for comprehensive neighbor integration, (2) employs a dual GAT encoder for directional neighbor modeling, and (3) features two parameter-free decoders to decouple training from particular tasks. DUPLEX outperforms state-of-the-art models, especially for nodes with sparse connectivity, and demonstrates robust inductive capability and adaptability across various tasks. The code is available at \url{https://github.com/alipay/DUPLEX}.

\end{abstract}

\section{Introduction}
\label{sec:intro}
Graphs, as a powerful and versatile data structure, have cemented their importance across a myriad of domains, including social science~\cite{hu2017deep}, recommendation systems~\cite{wu2022graph}, bioinformatics~\cite{yue2020graph}, traffic prediction~\cite{zheng2020gman}, financial and risk analysis~\cite{yang2021financial,yu2022efficient}. 
At the heart of graph analytics lies the concept of graph embedding, which seeks to encode complex, high-dimensional graph structures into compact, low-dimensional vector spaces. These representations are then applied to a variety of predictive tasks, such as link prediction and node classification. While the research on graph embeddings has achieved notable success, the primary focus has been on undirected graphs, which often fail to capture the intricate directional relationships inherent in many real-world networks.

\begin{figure}[t]
\begin{center}
\includegraphics[width=0.495
\textwidth]{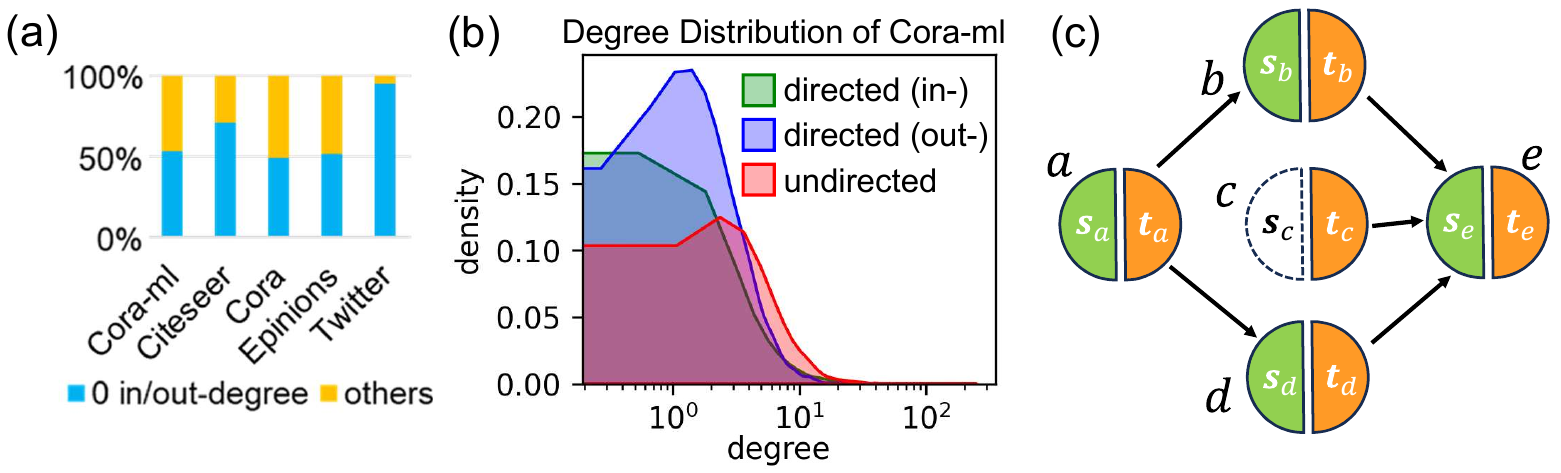}
\end{center}
\vspace{-2ex}
\caption{(a) Many nodes in real digraphs have zero in/out-degree.
(b) Separating in/out-degree yields more low-degree nodes compared to the total degree that disregards direction.
(c) Lack of in-neighbors hinders the dual embedding methods in capturing node $c$'s source role.}
\label{fig:intro}
\vspace{-4ex}
\end{figure}

Take, for example, the realm of social media, where users (nodes) follow one another, creating a digraph (directed graph) of influence and information flow. In this scenario, a directed edge from user $u$ to user $v$ implies that user $u$ follows user $v$, but not necessarily vice versa. The directionality of these edges encodes significant information about user behavior, influence patterns, and community structure. The need for digraph embedding (DGE) methods that can proficiently encode such asymmetrical structures is evident, yet the development of an effective DGE framework must also navigate several key desiderata to ensure practicality and utility:

\textbf{\textit{d1}.~Performance Across Various Node Degrees:}
Social networks are characterized by a wide disparity in user connectivity. An efficacious DGE method should consistently represent both influential personalities and average individuals, irrespective of the network's sparsity. This challenge is more pronounced in digraphs, where the separation of in and out neighbors often results in a higher proportion of nodes with a skewed degree distribution compared to their undirected counterparts (illustrated in Figure~\ref{fig:intro}(a)-(b)).

\textbf{\textit{d2}.~Transductive and Inductive Learning:}
The fluidity of social networks, marked by the continual arrival and departure of users, demands a DGE method capable of both updating embeddings for current users (transductive learning) and seamlessly extending to new users (inductive learning), without re-training from scratch.

\textbf{\textit{d3}.~Task Agnosticism:}
The embeddings produced by the DGE method should lend themselves to a variety of tasks such as detecting communities, recommending new connections, and identifying influential users, without being tailored to any single application.

Regrettably, our survey of the existing landscape reveals a gap: \textbf{existing DGE methods fall short in simultaneously satisfying all these desiderata}. As evidenced in Table~\ref{table:methods_compare}, while techniques that rely on self-supervised spatial GNNs (graph neural networks)~\cite{Ou2016AsymmetricTP, Zhou2017ScalableGE, Khosla2018NodeRL,Yoo2023DisentanglingDB, Kollias2022DirectedGA, Virinchi2023BLADEBN} demonstrate robustness against inductive bias and different downstream tasks, they typically learn dual embeddings (source and target roles) for each node, so as to better represent edge asymmetries. Consequently, they are prone to inferior performance for low in/out-degree nodes, since it fails to update the source embedding for low-out-degree nodes and the target embedding for low-in-degree nodes due to insufficient neighbors for training (cf. Figure~\ref{fig:intro}(c)). On the flip side, single-embedding strategies~\cite{Zhang2021MagNetAN, Lin2022AMF, Fiorini2022SigMaNetOL, Ko2023UniversalGC} can alleviate this issue by aggregating information from both in and out neighbors. However, they are generally bound to spectral GNN architectures and fully supervised paradigms, limiting their scope to specific tasks within a transductive context.

\begin{table}[t]
\tabcolsep = 0.06cm
\begin{center}
\caption{Existing DGE methods and their specifics.}
\vspace{-1ex}
\label{table:methods_compare}
\resizebox{\linewidth}{!}{
\begin{tabular}{lccc}
\toprule
 & Embedding & Network & Learning Algo. \\ \midrule
\begin{tabular}[c]{@{}l@{}}HOPE~\cite{Ou2016AsymmetricTP},\\ APP~\cite{Zhou2017ScalableGE},\\ NERD~\cite{Khosla2018NodeRL},\\ ODIN~\cite{Yoo2023DisentanglingDB},\\ DGGAN~\cite{Zhu2020AdversarialDG}\end{tabular} & dual & \begin{tabular}[c]{@{}l@{}}shallow \\ (no network)\end{tabular} & self-supervised \\ \hline
DiGAE~\cite{Kollias2022DirectedGA} & dual & spectral GNN & self-supervised \\ \hline
\begin{tabular}[c]{@{}l@{}}BLADE~\cite{Virinchi2023BLADEBN},\\ CoBA~\cite{liu2023collaborative}\end{tabular} & dual & spatial GNN & self-supervised \\ \hline
Gravity GAE~\cite{SalhaGalvan2019GravityInspiredGA} & dual & spectral GNN & self-supervised \\ \hline
Dhypr~\cite{Zhou2021DHYPRHN} & dual & spatial GNN & self-supervised \\ \hline
\begin{tabular}[c]{@{}l@{}}MagNet~\cite{Zhang2021MagNetAN},\\ Framelet-MagNet~\cite{Lin2022AMF},\\ SigMaNet~\cite{Fiorini2022SigMaNetOL}\end{tabular} & single & spectral GNN & supervised \\ \hline
UGCL~\cite{Ko2023UniversalGC} & single & spectral GNN & self-supervised \\ \hline
DUPLEX (proposed method) & single & spatial GNN & self-supervised \\
\bottomrule
\end{tabular}
}
\vspace{-4ex}
\end{center}
\end{table}
To bridge the above gap, we propose \textbf{DUPLEX} (\textbf{DU}al graph attention networks for com\textbf{PLEX} embedding of digraphs), a novel approach for digraph embedding. Our method leverages a dual graph attention network (GAT) encoder and two parameter-free decoders to learn a single complex embedding for each node. Specifically, to tackle the problem for those low-degree nodes (\textbf{\emph{d1}}), DUPLEX embraces complex embeddings, underpinning the source and target roles as complex conjugates, which facilitates a collaborative optimization leveraging both in and out neighbors. For the inductive power (\textbf{\emph{d2}}), DUPLEX employs a dual GAT encoder with separate components for the amplitude and phase parts of the complex node embedding. The amplitude encoder captures connection information with an undirected graph aggregator, while the phase encoder characterizes direction information using a digraph aggregator. Both aggregators update node embeddings by collecting information only from neighboring nodes, eliminating the need to access the entire graph and generalizing well to unseen nodes.  Lastly, to achieve task generalization (\textbf{\emph{d3}}), DUPLEX employs the two parameter-free decoders (direction-aware and connection-aware) for reconstructing the Hermitian adjacency matrix (HAM) of the digraph. The node embeddings are then trained in a self-supervised manner, preserving the structural characteristics. Thus, the resulting node embedding can adapt effectively to various downstream tasks. Our contributions can be summarized as:
\begin{itemize}[leftmargin=1em, itemsep= -2pt, topsep = 0pt, partopsep=0pt]
 \item We propose DUPLEX that learns \textbf{complex} node embeddings. Our approach incorporates a \textbf{dual GAT encoder}, comprising two specially designed graph aggregators for the amplitude and phase components. To our knowledge, this is the first exploration of using spatial GNNs for complex embeddings of digraphs.
 \item We propose two parameter-free decoders specially designed for complex embeddings, targeting at the reconstruction of the HAM of the digraph. The model can be trained in a self-supervised manner, enabling the adaptability of node embeddings across various tasks.
 \item We conduct comprehensive experiments on diverse tasks and datasets, showcasing the superior performance of our approach. The results demonstrate its efficacy in modeling low-degree nodes, generalizing to multiple tasks, and handling unseen nodes.
\end{itemize}

\section{Related Works}
The following discussion reviews existing digraph embedding (DGE) methodologies from three perspectives: the nature of the embeddings (dual vs. single), the network designs (transductive vs. inductive), and the learning paradigms (supervised vs. self-supervised). These insights further underscore the innovative design of DUPLEX.

\subsection{Dual versus Single Node Embeddings}
\label{sec:two-emb}
As shown in the second column of Table~\ref{table:methods_compare}, studies on DGE have largely diverged into two camps based on how they treat node roles: those that generate dual embeddings to capture distinct source and target roles and those that distill this information into a single embedding. Pioneering approaches like APP~\cite{Zhou2017ScalableGE} often fall short for sparsely connected nodes due to inadequate sampling of these nodes within random walks. Although improved sampling strategies from NERD~\cite{Khosla2018NodeRL} and BLADE~\cite{Virinchi2023BLADEBN} aim to remedy this, their effectiveness is curtailed by the small neighborhood sets of low-degree nodes, which are insufficient for accurate embeddings. DGGAN~\cite{Zhu2020AdversarialDG} and CoBA~\cite{liu2023collaborative} strive to settle this problem by bolstering the link between source and target embeddings, but only in a heuristic way. Alternatively, GravityGAE~\cite{SalhaGalvan2019GravityInspiredGA} and its hyperbolic extension~\cite{Zhou2021DHYPRHN} employs a gravitational analogy with a mass parameter only for the target role, yet struggles with nodes of meager in-degrees.

Conversely, single-embedding methods, exemplified by MagNet~\cite{Zhang2021MagNetAN} and its extensions~\cite{Lin2022AMF,Fiorini2022SigMaNetOL,Ko2023UniversalGC}, effectively consolidate information from both in and out neighbors into a unified complex embedding, overcoming the embedding quality challenges for nodes with low in/out-degrees.

DUPLEX aligns with the latter approach, using the HAM akin to the magnetic Laplacian. Yet, it advances the notion by modeling the source and target roles as complex conjugates, enabling joint optimization while preserving edge asymmetries.

\subsection{Transductive versus Inductive Network Designs}

As shown in the third column of Table~\ref{table:methods_compare}, the body of DGE work can also be classified based on its adaptability to unseen data. Shallow methods such as HOPE~\cite{Ou2016AsymmetricTP} and APP~\cite{Zhou2017ScalableGE}, which adapt classical techniques like matrix factorization and random walks to digraphs, operate under transductive settings where embeddings are directly learned as trainable parameters. Spectral GNNs, including DiGAE~\cite{Kollias2022DirectedGA}, Gravity GAE~\cite{SalhaGalvan2019GravityInspiredGA}, and MagNet~\cite{Zhang2021MagNetAN}, further necessitate full graph knowledge during graph Fourier transform, limiting their applicability to new nodes. Spatial GNNs, represented by BLADE~\cite{Virinchi2023BLADEBN}, CoBA~\cite{liu2023collaborative}, and Dhypr~\cite{Zhou2021DHYPRHN}, however, infer embeddings by aggregating local neighborhood information, enabling them to generalize to nodes absent during training.

DUPLEX builds upon this spatial approach with a dual encoder architecture that separately tackles the amplitude and phase aspects of embeddings, thereby accommodating both transductive and inductive learning effectively.

\subsection{Supervised versus Self-supervised Learning}

As shown in the fourth column of Table~\ref{table:methods_compare}, supervised methods such as MagNet~\cite{Ou2016AsymmetricTP} and subsequent variations~\cite{Lin2022AMF,Fiorini2022SigMaNetOL,Ko2023UniversalGC} leverage spectral GNN encoders with linear or convolutional decoders, trained end-to-end for specific tasks. This task-specific focus can hinder the generalizability of node embeddings, as they may not fully capture the graph's structural features. On the other hand, self-supervised methods~\cite{Ou2016AsymmetricTP,Yoo2023DisentanglingDB,liu2023collaborative} differ by encoding both connectivity and directionality within the graph, producing embeddings that are suitable for various downstream tasks.

As a result, DUPLEX employs a self-supervised method and reconstructs the Hermitian adjacency matrix through two parameter-free decoders, such that the embeddings given by the learnable encoder not only align with the graph's structural properties but also ensure the adaptability of the embeddings across multiple tasks.

\begin{figure*}[t]
\vspace{-0ex}
\begin{center}
\includegraphics[width=0.99\textwidth]{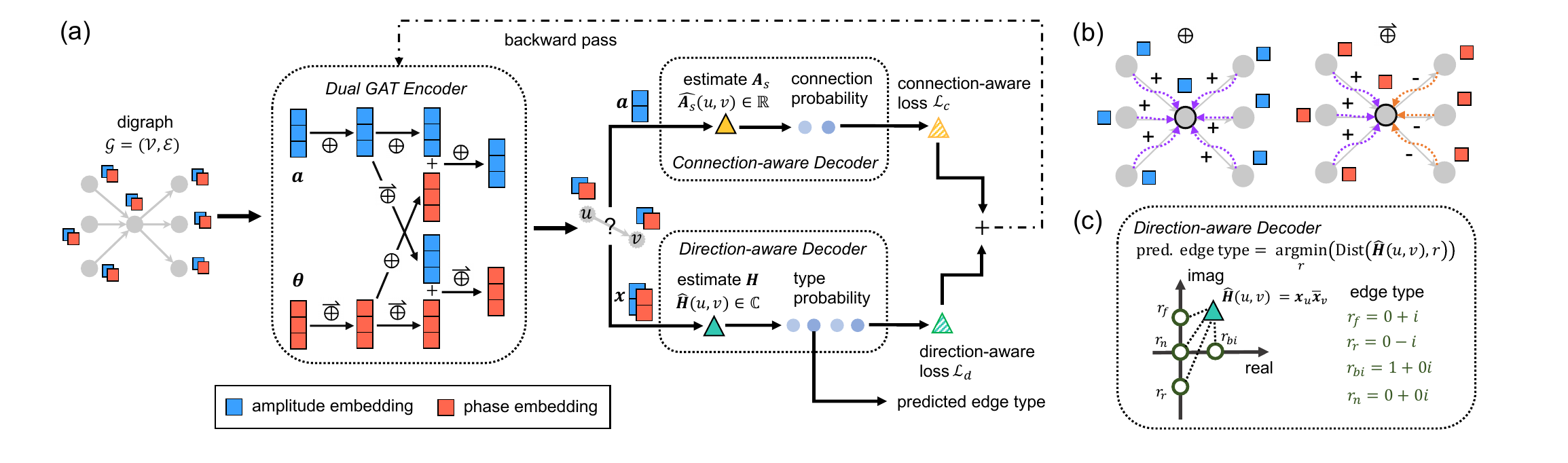}
\end{center}
\vspace{-1ex}
\caption{The architecture of DUPLEX. (a) Forward pass and backward pass of the model. (b) The undirected ($\oplus$) and directed ($\ovset{\rightharpoonup}{\oplus}$) graph aggregator. (c) The main idea of the direction-aware decoder.}
\label{fig:framework}
\vspace{-1ex}
\end{figure*}

\section{Method}
\label{sec:method}
A digraph (directed graph) $\gG = \{\gV,\gE\}$ is defined by its nodes $\gV$ and directed edges $\gE$, with each edge $(u, v) \in \gE$ representing a connection from node $u$ to node $v$. Our goal is to represent each node $u$ by a $d$-dimensional complex vector $\vx_u \in \mathbb{C}^{d \times 1}$ that meets the desiderata presented in Section~\ref{sec:intro}. To this end, we introduce DUPLEX, a novel framework that is illustrated in Figure~\ref{fig:framework}(a). At the heart of DUPLEX lies the Hermitian adjacency matrix (HAM), whose symmetrical structure and complex-valued representation enable the integration of both directionality and connectivity within a cohesive framework. This mathematical formalism guides the design of DUPLEX's embeddings, encoder, decoder, and loss functions, ensuring coherence with HAM's fundamental principles. Specifically, beginning with initial embeddings\footnote{For non-attributed graphs, we use the standard normal distribution for initialization.}, the dual GAT encoder generates amplitude and phase embeddings that encode the graph's connectivity and directionality, respectively. A fusion layer within the encoder integrates these embeddings, while subsequent direction-aware and connection-aware decoders—both parameter-free—rebuild the HAM. This architecture ensures that the learned complex embeddings authentically represent the graph's structure, guided by self-supervised loss functions.

In the sequel, we introduce each component in DUPLEX, including the HAM and the resulting embedding formulations, the dual GAT encoder, the decoders, and the loss functions.

\subsection{Hermitian Adjacency Matrix Construction}
\label{sec:hermitian-at-cons}

Recall that classical undirected graph embedding methods often employ matrix factorization techniques that can be represented as $\mA = \mX^\intercal\mX$, where $\mA$ is the symmetric adjacency matrix and $\mX$ is the corresponding real-valued node embedding matrix. In contrast, digraphs feature asymmetric adjacency matrices, resulting in factorizations of the form $\mA = \mX_s^\intercal\mX_t$, with $\mX_s$ and $\mX_t$ denoting the source and target embeddings, respectively. As highlighted in Section~\ref{sec:two-emb}, such dual embedding strategies encounter difficulties in generating high-quality embeddings for nodes with low in/out-degree. More discussions from the perspective of SVD (singular value decomposition) are provided in Appendix~\ref{app:SVD_drawback}.

To address these limitations, we adopt the Hermitian adjacency matrix (HAM), a symmetric matrix utilized within spectral graph theory~\cite{Guo2015HermitianAM,liu2015hermitian} to represent digraphs. The HAM $\mH$ for a digraph $\gG$ is defined in polar form as:
\begin{equation}
 \label{eq:hermitian}
 \mH = \mA_s \odot \exp\left( i\frac{\pi}{2}\mTheta \right),
\end{equation}
where $i$ represents the imaginary unit, $\pi$ is the known mathematical constant, and $\odot$ signifies the Hadamard product. The symmetric binary matrix $\mA_s$ satisfies $\mA_s(u,v) = 1$ if $(u,v) \in \gE \vee (v,u) \in \gE$, and 0 otherwise. The antisymmetric matrix $\mTheta$ contains elements from the set $\{-1,0,1\}$, as defined by:
\begin{equation}
 \label{eq:theta}
 \mTheta(u,v)= 
 \begin{cases}
 1, & \text{if } (u,v) \in \gE, \\
 -1, & \text{if } (v,u) \in \gE,\\
 0, & \text{otherwise}.\\
 \end{cases} 
\end{equation}
As a result, $\mH(u,v)\in \{i, -i, 1, 0\}$ alone can represent forward, reverse, bidirectional, and no edge between $u$ and $v$, whereas the asymmetric adjacency matrix $\mA$ requires both $\mA(u,v)$ and $\mA(v,u)\in\{0,1\}$ for the same goal.

Moreover, the matrix decomposition $\mH = \mX^\intercal\bar{\mX}$ allows for the derivation of the node embedding $\vx_u$ in polar form:
\begin{align}
    \vx_u &= \va_u \odot \exp\left( i\frac{\pi}{2}\vtheta_u \right), \label{eq:node_emb_s} \\
    \bar{\vx}_u &= \va_u \odot \exp\left( -i\frac{\pi}{2}\vtheta_u \right). \label{eq:node_emb_t}
\end{align}
Here, $\va_u$ encapsulates the amplitude and $\vtheta_u$ the phase of $\vx_u$. We can interpret $\vx_u$ and $\bar{\vx}_u$ as the complex conjugate embeddings representing the source and target roles of node $u$, in contrast to the independent dual embeddings of previous methods. This embedding representation is a joint function of $\va_u$ and $\vtheta_u$, facilitating co-optimized learning from both in and out-neighborhoods, which will be explained in Section~\ref{sec:dual_enc}, thus resolving the issue of suboptimal embeddings for nodes with low degree connectivity. From another perspective, $\vx_u$ can be considered as a unified embedding vector for node $u$, with the HAM reconstructible through the Hermitian inner product between node embeddings instead of the ordinary inner product.

\subsection{Dual GAT Encoder}
\label{sec:dual_enc}
The encoder in DUPLEX aims to generate the amplitude $\va_u$ and phase $\vtheta_u$ embeddings, which combine to form the complex node embeddings $\vx_u$. One of the nice properties of the HAM is the separation of the amplitude $\mA_s$ and phase $\mTheta$ components to capture the connection and direction information, as shown in Eq.~(\ref{eq:hermitian}). Moreover, Eqs.~(\ref{eq:node_emb_s}) and (\ref{eq:node_emb_t}) reveal that the amplitude parts of the source and target embeddings are identical for a node, but the phase parts exhibit opposite signs. In other words, the phase embedding is direction-aware, whereas the amplitude embedding is not. This insight lays the groundwork for a dual encoder that separately processes amplitude and phase, enabling streamlined learning and halving the time complexity and parameter count compared to direct complex embeddings encoding (detailed in Appendix~\ref{app:complexity}).

Our dual encoder comprises an amplitude encoder, a phase encoder, and a fusion layer, all based on the spatial GAT backbone for its flexibility and scalability~\cite{Velickovic2017GraphAN}. Note that GAT can be replaced by other spatial GNNs in DUPLEX (see Section~\ref{sec:ablation}). Both encoders update embeddings by aggregating information from in and out neighbors, mitigating the issue of poor embeddings for low-degree nodes, as we further discuss now.

\subsubsection{Amplitude Encoder}
Since the amplitude embedding $\va_u$ captures connection information akin to undirected graphs, we utilize the original GAT~\cite{Velickovic2017GraphAN} to update $\va_u$:
\begin{equation}
 \label{eq:am_update}
 \va_u' = \phi\Big(\oplus\big(\{\va_v,\forall v\in\gN(u)\}\big)\Big),
\end{equation}
where $\va_u'$ denotes the updated amplitude embedding, $\gN(u)$ includes node $u$ and its neighbors, $\phi$ is the activation function (ReLU in our case), and $\oplus(\cdot)$ is the aggregator used for the undirected graph (see Figure~\ref{fig:framework}(b)):
\begin{equation}
 \label{eq:am_agg}
  \oplus\big(\{\va_v,\forall v\in\gN(u)\}\big) = \sum_{v\in\gN(u)}f_a(\va_u,\va_v)\psi_a(\va_v).
\end{equation}
Here, $f_a(\va_u,\va_v)\psi_a(\va_v)$ is a learnable attention mechanism that computes the relevance of neighbor embeddings to node $u$.

\subsubsection{Phase Encoder}  
Bridging the gap between undirected graph embedding approaches and the directionality of digraphs, the phase encoder introduces a novel, direction-aware graph aggregator. This aggregator respects the sign difference of $\vtheta_u$ in Eqs.~(\ref{eq:node_emb_s}) and (\ref{eq:node_emb_t}). Specifically, it consumes information from the in-neighbors and out-neighbors with different signs (as depicted in Figure~\ref{fig:framework}(b)), enabling the phase encoder to effectively incorporate the contributions from both types of neighbors while considering their inherent asymmetry:
\begin{align}
    \label{eq:ph_agg}
     &\ovset{\rightharpoonup}{\oplus}\big(\{\vtheta_v : v\in\gN(u)\}\big) 
    =\, \sum_{v\in\gN_{\text{in}}(u)}f_\theta(\vtheta_u,\vtheta_v)\psi_\theta(\vtheta_v) \notag \\ 
    &\qquad -\sum_{v\in\gN_{\text{out}}(u)}f_\theta(\vtheta_u,\vtheta_v)\psi_\theta(\vtheta_v), 
\end{align}
where $\gN_{\text{in}}(u)$ and $\gN_{\text{out}}(u)$ denote the sets of in-neighbors and out-neighbors, respectively. The resulting phase embedding in each layer can be updated as:
\begin{equation}
    \label{eq:ph_update}
        \vtheta_u' =  
        \phi\Big(\ovset{\rightharpoonup}{\oplus}\big(\{\vtheta_v : v\in\gN(u)\}\big)\Big).
\end{equation}
This design choice results in a phase encoder that is distinctly directional, setting it apart as a novel contribution to graph representation learning.

\subsubsection{Fusion Layer} 
\label{ssec:fusion_layer}
The fusion layer functions as a pivotal junction where the amplitude and phase embeddings intersect, each carrying unique yet complementary information essential for the reconstruction of the HAM. Let us consider the amplitude encoder as an illustrative example. In the fusion layer, we not only aggregate the information from the amplitude embeddings but also gather side information from the phase embeddings of the previous layer using the undirected graph aggregator $\oplus$. Mathematically, this can be formulated as:
\begin{equation}
 \label{eq:am_fu_update}
 \va_u' = \phi\Big(\oplus\big(\{\va_v,\forall v\in\gN(u)\}\big)+\oplus\big(\{\vtheta_v,\forall v\in\gN(u)\}\big)\Big).
\end{equation}
The resulting $\va_u'$ is then propagated to the subsequent layers in the amplitude encoder. On the other hand, for the phase encoder, the fusion layer can be similarly expressed by replacing $\oplus$ with $\ovset{\rightharpoonup}{\oplus}$ in Eq.~\eqref{eq:am_fu_update}.

Our approach employs a ``mid-fusion'' strategy, integrating embeddings at the network's intermediate layers for two main reasons. First, in the absence of attributes, node features are initially random, and early fusion might introduce noise rather than beneficial information. Second, fusing at the terminal layer could dilute the unique attributes of amplitude and phase embeddings that encode connection and directional information, respectively, potentially hindering the decoders' ability to accurately reconstruct the HAM. Consequently, the fusion layer is optimally placed as an intermediary, allowing for the coordinated optimization within the dual encoder framework.


The integration of the fusion layer within the DUPLEX framework is also founded on the hypothesis that an adequately flexible encoder can achieve superior performance by permitting the controlled interchange of information between the amplitude and phase. This fusion is implemented utilizing the attention mechanism, where the resulting attention score dictates the strength of the interchange. A critical aspect of this implementation is that when the attention score is minimal, nearing zero, it effectively precludes the exchange of information between the amplitude and phase. Consequently, the data itself, from which the attention scores are adaptively learned, governs the presence and extent of information exchange.

\textbf{Relation to GAT}: DUPLEX amounts to GAT for undirected graphs after removing the phase or direction-related components. In other words, DUPLEX is an extension of GAT that is augmented for digraphs through the integration of direction information, and so inherits the merits of GAT.

\subsection{Parameter-free Decoder and Self-supervised Loss}
\label{sec:decoder}
After obtaining the complex embeddings for nodes, DUPLEX proceeds to reconstruct the HAM with two parameter-free decoders (i.e., direction and connection-aware decoders), each paired with its own self-supervised loss function. Unlike previous methods in Table~\ref{table:methods_compare} that reconstruct the real-valued asymmetric adjacency matrices with the connection-aware decoder, the direction-aware decoder in DUPLEX is specially designed for the complex-valued HAM. Meanwhile, the connection-aware decoder complements the former by focusing on the binary existence of connections, effectively serving as an auxiliary to the direction-aware objective. The parameter-free decoders and self-supervised losses enable DUPLEX to learn both connectivity and directionality within the digraph, fostering node embeddings that generalize effectively across tasks without depending on tailored parameterized decoders.

\subsubsection{Direction-aware Decoder and Loss} 

Recall that HAM comprises four distinct elements, namely $\mH(u,v)\in \gR =\{i, -i, 1, 0\}$, which respectively signify forward, reverse, bidirectional, and no edges between any pair of nodes $u$ and $v$. The reconstruction task models the estimated matrix elements $\hat\mH(u,v) = \vx_u^\intercal\bar{\vx}_v$ to align with the ground truth $\mH(u,v)$. Recognizing the limitations of low-rank embeddings in capturing the full spectrum of the HAM, we approach the problem as a classification task. Each edge $(u,v)$ is assigned probabilities corresponding to its edge type, based on the relative distance between $\hat\mH(u,v)$ and $\mH(u,v)$ (illustrated in Figure~\ref{fig:framework}(c)):
\begin{equation}
 \label{eq:mul-decoder}
 P(\hat\mH(u,v) = r) = \frac{\exp(-|\vx_u^\intercal\bar{\vx}_v-r|)}{\sum_{r'\in \gR}\exp(-|\vx_u^\intercal\bar{\vx}_v-r'|)},   \quad \forall r \in \gR,
\end{equation}
where the L1 distance is used, due to its advantage over L2 (Appendix~\ref{app:distance}). We then minimize the negative log-likelihood of the samples pertaining to different edge types, resulting in the self-supervised direction-aware loss: 
\begin{align}
    \gL_d = -\sum_{r\in\gR}\sum_{\mH(u,v) = r} \log P(\hat\mH(u,v) = r).
\end{align}
 
\subsubsection{Connection-aware Decoder and Loss} 
The connection-aware decoder isolates the task of discerning node connectivity by reconstructing $\mA_s$, the amplitude component of the HAM~\eqref{eq:hermitian}, from the amplitude embeddings $\va_u$. It posits the connection probability for an edge $(u,v)$ as:
\begin{equation}
 \label{eq:ex-decoder}
 P(\hat\mA_s(u,v) = 1) = \sigma(\va_u^\intercal \va_v),
\end{equation}
where $\sigma$ is the sigmoid function and $\hat\mA_s$ is the estimated connection matrix. This decoder's loss function $\gL_c$ adheres to the same negative log-likelihood minimization principle as the direction-aware loss.

\subsubsection{Total Loss} 
\label{sec:ttloss}
The total loss can be written as: $\gL =\gL_d+\lambda\gL_c$, where $\lambda$ is the weight parameter for the connection-aware loss $\gL_c$. The optimization target of the connection-aware loss is subsumed within the broader objective of the direction-aware loss. In tandem, the connection-aware loss can support the direction-aware loss by constraining the optimization space in the initial stages. As a result, we start with a predefined value of $\lambda = \lambda_0$, and allow $\lambda$ to decay (i.e., $\lambda = \lambda_0 q^k$) with a decay factor $q < 1$ as the number of epochs $k$ increases, reflecting the decreasing necessity of the connection-aware loss as direction-aware accuracy improves. This objective function implicitly mandates $\va_u$ and $\vtheta_u$ to encapsulate the amplitude and phase information respectively, given the encoders possess sufficient flexibility to facilitate such characterization.

\subsubsection{Supervised Training of DUPLEX} 
Apart from self-supervised training, DUPLEX can be readily trained end-to-end in a supervised manner for specific downstream tasks. This entails replacing the decoder and loss function with a task-specific objective. Taking node classification as an example, given the complex embeddings encoded by the dual GAT encoder, we can simply concatenate the amplitude and phase embeddings. These concatenated embeddings are then mapped to node labels using a linear layer as in~\cite{Zhang2021MagNetAN}. The model is then trained using a cross-entropy loss function, optimizing its performance specifically for the node classification task.

\textbf{Relation to the MagNet series}: MagNet~\cite{Zhang2021MagNetAN} and its variants~\cite{Fiorini2022SigMaNetOL,Lin2022AMF} mainly extend spectral GCNs from undirected to digraphs by substituting the traditional Laplacian in graph convolutions with its magnetic counterpart, combining amplitude and phase embeddings, and utilizing a linear decoder for link prediction tasks. While both the magnet Laplacian and the HAM are Hermitian matrices, MagNet's approach is rooted in spectral domain convolutions. In contrast, DUPLEX innovatively employs spatial GNNs derived from the properties of HAM, with the goal of further reconstructing the HAM. Furthermore, unlike the MagNet series, which predominantly operates under fully supervised training, DUPLEX takes advantage of a self-supervised learning paradigm.

\section{Experiments}
In this section, we benchmark DUPELX against existing digraph embedding methods on three tasks, namely, link prediction, transductive node classification, and inductive node classification. Furthermore, we conduct a comprehensive ablation study on DUPLEX to assess the significance of its different components.

\subsection{Datasets and Experiment Set-up}

\begin{table}[t]
\caption{Dataset characteristics.}
\vspace{2ex}
\label{table:dataset}
\tabcolsep = 0.1cm
\begin{center}
\resizebox{\linewidth}{!}{
\begin{tabular}{lccccc} 
\toprule
Dataset  & $|\gV|$ & $|\gE|$ & \%Directed Edges & Feature Dim & \#Classes \\ \midrule
Cora-ml  & 2,995 & 8,416 & 93.9 & 2,879 & 7 \\
Citeseer & 3,312   & 4,715   & 95.0 & 3,703 & 6  \\
Cora & 23,166  & 91,500  & 94.9 & - & - \\
Epinions & 75,879  & 508,837 & 59.5 & - & - \\
Twitter  & 465,017 & 834,797 & 99.7 & - & -   \\ 
\bottomrule
\end{tabular}}
\end{center}
\vspace{-6ex}
\end{table}

The experiments are conducted on five public datasets of digraphs, namely Cora-ml, Citeseer, Cora, Epinions, and Twitter (overview in Table~\ref{table:dataset} and more details in Appendix~\ref{app:datasets}). Note that only Cora-ml and Citeseer datasets are suitable for node classification tasks, as they include node labels and initial attributes. The graph datasets used in our study are unweighted and directed. We then compare DUPLEX with a selection of state-of-the-art (SOTA) node embedding models for digraphs, including both dual and single embedding methods. The first group consists of \textbf{NERD}~\cite{Khosla2018NodeRL}, \textbf{DGGAN}~\cite{Zhu2020AdversarialDG}, \textbf{DiGAE}~\cite{Kollias2022DirectedGA} and \textbf{ODIN}~\cite{Yoo2023DisentanglingDB}. For the second group, we consider \textbf{MagNet}~\cite{Zhang2021MagNetAN} and \textbf{SigMaNet}~\cite{Fiorini2022SigMaNetOL}. In addition, we examine the performance of a simplified version of our model, denoted as \textbf{DUPLEX*}, which excludes the fusion layer. To ensure optimal performance, we perform hyperparameter selection for each method. Moreover, we repeat the experiments 10 times and report the average values, along with the standard deviation across the runs (surrounded by brackets). For implementation details, please see Appendix~\ref{app:implementation}.

\subsection{Downstream Tasks}
\subsubsection{Link Prediction}
\label{sec:lp}

\begin{table*}[t]
\vspace{-0ex}
\caption{Link prediction AUC (\%) and ACC (\%) for subtask 1 and 2. The best results are in \textbf{bold} and the second are \underline{underlined}.}
\vspace{-1.5ex}
\label{table:auc_acc}
\tabcolsep = 0.05cm
\begin{center}
\resizebox{\linewidth}{!}{
\begin{tabular}{lcccccccccccccccc}
\toprule
\multirow{2}{*}{\textbf{Method}} & \multicolumn{8}{c}{\textbf{Existence Prediction}}  & \multicolumn{8}{c}{\textbf{Direction Prediction}}  \\
 & \multicolumn{2}{c}{Citeseer} & \multicolumn{2}{c}{Cora} & \multicolumn{2}{c}{Epinions} & \multicolumn{2}{c}{Twitter} & \multicolumn{2}{c}{Citeseer} & \multicolumn{2}{c}{Cora} & \multicolumn{2}{c}{Epinions} & \multicolumn{2}{c}{Twitter} \\
 & AUC & ACC & AUC & ACC & AUC & ACC & AUC & ACC & AUC & ACC & AUC & ACC & AUC & ACC & AUC & ACC \\ \midrule
NERD & 80.4(0.6) & 79.0(1.1) & 85.1(0.5) & 72.8(0.1) & 78.6(0.1) & 66.5(0.1) & 94.9(0.0) & 77.6(0.1) & 82.1(1.0) & 83.1(0.6) & 92.3(0.2) & 76.2(0.5) & 86.9(0.3) & 55.6(0.1) & 95.6(0.1) & 80.9(0.1) \\
DGGAN & 84.5(0.9) & 50.1(0.0) & 89.6(0.2) & 54.6(3.0) & 84.5(2.7) & 50.0(0.0) & 99.1(0.1) & 77.8(8.1) & 89.4(0.8) & 50.0(0.0) & 95.4(0.2) & 57.1(5.0) & 94.0(2.0) & 50.0(0.0) & 99.1(0.1) & 80.7(13) \\
ODIN & 86.2(1.1) & 76.8(1.5) & 90.8(0.2) & 83.1(0.2) & \underline{90.9(0.1)} & 83.3(0.2) & \underline{99.2(0.0)} & \textbf{98.7(0.1)} & 95.4(0.8) & 87.5(1.2) & 96.7(0.2) & 90.5(0.2) & \textbf{97.4(0.0)} & 92.1(0.1) & 99.8(0.0) & 99.6(0.0) \\
DiGAE & 87.1(1.9) & 72.8(2.1) & 83.9(0.5) & 70.1(0.2) & 81.8(0.1) & 70.2(0.1) & 98.8(0.0) & 56.7(0.0) & 78.5(2.1) & 56.8(1.7) & 89.8(0.3) & 71.2(0.2) & 91.5(0.1) & 55.6(0.1) & \textbf{99.9(0.0)} & 50.0(0.0) \\
MagNet & 88.3(0.4) & 80.7(0.8) & 89.4(0.1) & 81.4(0.3) & 85.1(0.1) & 77.5(0.3) & 99.1(0.1) & 97.7(0.1) & 96.4(0.6) & 91.7(0.9) & 95.4(0.2) & 88.9(0.4) & 96.6(0.1) & 92.1(0.0) & \underline{99.9(0.1)} & 98.5(0.9) \\
SigMaNet & 91.4(0.8) & 84.4(3.2) & 93.6(0.2) & 87.7(0.4) & 90.3(0.0) & 82.5(0.0) & 99.1(0.0) & \underline{98.3(0.0)} & 98.3(0.3) & 97.8(0.9) & 96.4(0.0) & 94.7(0.1) & \underline{96.7(0.0)} & \underline{92.5(0.0)} & \textbf{99.9(0.0)} & \underline{99.7(0.0)} \\
DUPLEX* & \underline{98.0(0.7)} & \underline{95.3(0.5)} & \underline{95.8(0.3)} & \underline{93.1(0.1)} & 90.7(0.2) & \underline{84.5(0.5)} & 96.1(0.6) & 96.9(0.2) & \underline{99.3(0.5)} & \underline{98.3(0.3)} & \underline{97.1(0.2)} & \underline{95.5(0.3)} & 93.9(0.2) & 92.2(0.0) & \textbf{99.9(0.0)} & \underline{99.7(0.0)} \\
DUPLEX & \textbf{98.6(0.4)} & \textbf{95.7(0.5)} & \textbf{95.9(0.1)} & \textbf{93.2(0.1)} & \textbf{91.0(0.2)} & \textbf{85.5(0.0)} & \textbf{99.3(0.2)} & \textbf{98.7(0.1)} & \textbf{99.7(0.2)} & \textbf{98.7(0.4)} & \textbf{97.2(0.2)} & \textbf{95.9(0.1)} & 95.2(0.4) & \textbf{92.6(0.1)} & \textbf{99.9(0.0)} & \textbf{99.8(0.0)} \\
\bottomrule
\end{tabular}}
\vspace{-2.5ex}
\end{center}
\end{table*}

\begin{table*}[t]
    \centering
    \caption{Link prediction ACC (\%) for subtask 3 and 4.}
    \vspace{1.5ex}
    \label{table:acc_34}
    \tabcolsep = 0.1cm
    \resizebox{0.55\linewidth}{!}{
    \begin{tabular}{lcccccccc}
    \toprule
    \multirow{2}{*}{\textbf{Method}} & \multicolumn{4}{c}{\textbf{Three-type Classification}} & \multicolumn{4}{c}{\textbf{Four-type Classification}} \\
    & Citeseer & Cora & Epinions & Twitter  & Citeseer & Cora & Epinions & Twitter  \\ \midrule
NERD & 68.8(0.7) & 67.7(0.6) & 66.6(0.3) & 73.2(0.1) & 31.4(0.6) & 38.6(0.8) & 33.1(0.4) & 32.2(0.4) \\
DGGAN & 59.0(0.3) & 58.8(0.1) & 57.6(0.3) & 70.0(5.0) & 13.4(4.0) & 10.5(6.4) & 20.1(0.1) & 67.4(7.1) \\
ODIN & 67.2(0.8) & 72.1(0.3) & 87.3(0.1) & 98.5(0.0) & 67.1(0.8) & 70.6(0.4) & 73.1(0.1) & \underline{98.0(0.0)} \\
DiGAE & 83.7(1.3) & 68.5(0.3) & 80.4(0.2) & 69.8(0.0) & 42.3(0.6) & 34.4(0.3) & 40.2(0.2) & 35.9(0.0) \\
MagNet & 72.0(0.9) & 66.8(0.3) & 76.9(0.9) & 93.9(2.6) & 69.3(0.4) & 63.0(0.3) & 65.2(0.4) & 91.6(1.1) \\
SigMaNet & 81.3(0.4) & 80.3(0.2) & 86.4(0.1) & 98.0(0.0) & 76.2(2.1) & 78.7(0.4) & \underline{75.4(0.1)} & 97.2(0.0) \\
DUPLEX* & \underline{93.7(0.9)} & \underline{92.1(0.1)} & \underline{88.9(0.1)} & \underline{99.0(0.1)} & \underline{90.7(0.8)} & \underline{88.3(0.2)} & 74.9(0.6) & 93.6(0.3) \\
DUPLEX & \textbf{94.8(0.2)} & \textbf{92.2(0.1)} & \textbf{88.9(0.0)} & \textbf{99.2(0.1)} & \textbf{91.1(1.0)} & \textbf{88.4(0.4)} & \textbf{76.4(0.2)} & \textbf{98.1(0.2)} \\
    \bottomrule
    \end{tabular}}
    \vspace{-1ex}

\end{table*}

Embarking on our exploration with link prediction, we adopt the approach delineated by Zhang et al.\cite{Zhang2021MagNetAN} to randomly split the datasets into training, validation, and testing subsets, maintaining a ratio of 16:1:3. To thoroughly evaluate the model's proficiency in discerning varied edge types, we engage in four prevalent subtasks: \textbf{Existence Prediction} (EP)\cite{Zhu2020AdversarialDG}, which determines the likelihood of an edge's presence; \textbf{Direction Prediction} (DP)\cite{Zhu2020AdversarialDG, Zhang2021MagNetAN}, ascertaining the orientation of unidirectional edges; \textbf{Three-type Classification} (TP)\cite{Zhang2021MagNetAN}, categorizing edges as positive, reverse, or non-existent; and \textbf{Four-type Classification} (FP), discerning positive, reverse, bidirectional, or non-existent edges. We present additional details in Appendix~\ref{app:lp-setup}.

Notably, \textbf{DUPLEX}, alongside \textbf{NERD}, \textbf{DGGAN}, \textbf{DiGAE}, and \textbf{ODIN}, employs a self-supervised approach to regenerate the directed graph, enabling the direct application of learned embeddings across all subtasks. The task-specific \textbf{MagNet} and \textbf{SigMaNet}, by contrast, necessitate individual end-to-end training for each specific subtask. The outcomes, including Area Under Curve (AUC) for subtasks EP and DP, and accuracy (ACC) for all tasks, are listed in Tables~\ref{table:auc_acc}-\ref{table:acc_34}. 

DUPLEX shines in the EP and DP experiments, claiming the highest AUC in 7 out of 8 cases and dominating in ACC across all subtasks, showcasing its adeptness in capturing a spectrum of node relationships within digraphs. Even without the fusion layer, the ablated DUPLEX* variant secures top or near-top AUC or ACC scores in 12 out of 16 experiments. Almost all methods excel in DP, particularly on larger datasets, likely due to their design focus on edge directionality and the abundance of training data. Yet, DUPLEX distinguishes itself in more nuanced tasks like TP and FP, and in smaller datasets. For instance, on the Citeseer dataset, DUPLEX records approximately 11.1\% and 14.9\% improvements in TP and FP, respectively. Additionally, ODIN delivers satisfying results across most tasks, which may stem from its strategic address of potential degree distribution discrepancies between training and testing sets. Despite this, as a shallow method, ODIN requires re-initialization for newly introduced nodes and graphs. MagNet and SigMaNet also perform robustly by leveraging the magnetic Laplacian, extensively harnessing whole-graph information for embedding optimization. It should be noted, however, that they incur a greater computational complexity due to their reliance on complex-valued matrix operations, a challenge that DUPLEX circumvents with its separation of the amplitude and phase embeddings, as detailed in Appendix~\ref{app:complexity}.

\begin{figure}[t]
    \centering
    \includegraphics[width=0.49
    \textwidth]{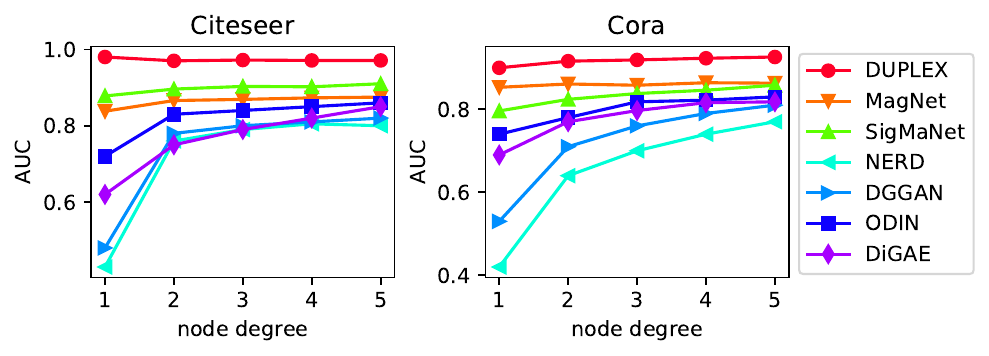}
    \vspace{-3ex}
    \caption{Link existence prediction AUC (\%) under extremely low-degree setting.}
    \label{fig:low-degree}
    \vspace{-3ex}
\end{figure}

We further plot the AUC in subtask 1 against the in/out-degree of nodes in the testing sets of Citeseer and Cora in Figure~\ref{fig:low-degree}. The x-axis in the figure denotes the threshold, indicating that in the testing set, the source node's out-degree or the target node's in-degree does not exceed the specified threshold. This analysis provides insights into the capability of the benchmark methods in handling low-degree nodes. As expected, the single embedding methods, namely DUPLEX, MagNet and SigMaNet, are robust to the change of the node degree and consistently outperform the dual embedding methods, namely NERD, DGGAN, DiGAE and ODIN. This superiority is particularly evident when the maximum in/out-degree of testing nodes becomes small (e.g., 1). In this scenario, DUPLEX's performance surges, attributable to its complex conjugate role-based collaborative optimization, yielding increases from a minimum of 25.9\% up to 55.0\% on Citeseer, and from 16.3\% to 47.8\% on Cora, when juxtaposed with dual embedding methods.
Moreover, DUPLEX performs even better than MagNet and SigMaNet. They adopt the magnetic Laplacian as the filter to extract features from the digraph and further learn a linear classifier for link prediction based on these features.
Instead, DUPLEX embraces a reconstruction-centric approach, striving to reconstitute the HAM through a direction-aware loss, subsequently guiding the dual encoder towards a deeper assimilation of the graph structure. This reconstruction-oriented methodology likely underpins DUPLEX's enhanced performance relative to feature extraction and linear classification-based methods. In summary, DUPLEX overcomes the problem of suboptimal embedding for low-degree nodes in digraphs, surpassing other methods by large margins.

\subsubsection{Transductive Node Classification}
\label{sec:tran_node_exp}

In this study, we assess the adaptability of node embeddings generated by DUPLEX, which were trained using the proposed self-supervised approach, to secondary tasks such as node classification. After initially training embeddings to encapsulate the graph's structure over the entire graph as delineated in Section~\ref{sec:method}, we leverage these embeddings to train a two-layer Multilayer Perceptron (MLP) for node classification. It is important to highlight that this experiment unfolds in a transductive setting where labels for the test set are concealed during training, but not the nodes. 

Our comparative analysis encompasses \textbf{DUPLEX} and a cohort of self-supervised counterparts, including \textbf{NERD}, \textbf{DGGAN}, \textbf{DiGAE}, and \textbf{ODIN}. For DUPLEX, we combine amplitude and phase embeddings to derive the final node representations, whereas for the remaining methods, source and target embeddings are concatenated. We also introduce \textbf{DUPLEX-S}, a supervised iteration of DUPLEX, expressly trained end-to-end for link prediction, to determine if embeddings optimized for one task maintain their relevance across others. To condition DUPLEX-S on the supervised link prediction task, we adopt the methodology from~\cite{Zhang2021MagNetAN}, which involves concatenating complex embeddings for edge-associated node pairs and employing a linear layer as both decoder and classifier. We then concurrently train the dual encoder and decoder to classify edges into four distinct categories. Nodes across all datasets are partitioned into training, validation, and testing subsets with a 3:1:1 split. We utilize macro $F_1$ and micro $F_1$-scores for our evaluation metrics.

\begin{table}[t]
\vspace{-1ex}
\caption{Transductive node classification results.} 
\vspace{1ex}
\label{table:node_trans}
\tabcolsep = 0.1cm
\begin{center}
\resizebox{0.7\linewidth}{!}{
\begin{tabular}{lcccc}
\toprule
\multirow{2}{*}{\textbf{Method}} & \multicolumn{2}{c}{\textbf{Citeseer}}   & \multicolumn{2}{c}{\textbf{Cora-ml}}    \\
                                 & mac. $F_1$   & mic. $F_1$   & mac. $F_1$   & mic. $F_1$   \\ \midrule
NERD & 49.0(0.4) & 52.5(0.7) & \textbf{78.8(2.3)} & \textbf{80.1(1.6)} \\
DGGAN & 19.1(1.5) & 22.7(1.9) & 15.6(1.4) & 23.6(3.7) \\
ODIN & 17.7(2.5) & 19.2(2.2) & 12.6(2.1) & 18.0(2.0) \\
DiGAE & 26.6(3.1) & 29.8(3.2) & 55.1(4.8) & 56.2(3.2) \\
DUPLEX-S & 40.0(0.2) & 42.6(1.1) & 64.3(1.4) & 65.7(1.2) \\
DUPLEX* & \underline{51.2(0.1)} & \underline{54.3(2.0)} & 76.0(1.3) & 77.8(1.6) \\
DUPLEX & \textbf{53.0(2.5)} & \textbf{56.2(1.7)} & \underline{77.9(0.6)} & \underline{79.8(0.7)} \\ \bottomrule
\end{tabular}}
\end{center}
\vspace{-3ex}
\end{table}

As demonstrated in Table~\ref{table:node_trans}, DUPLEX outperforms other methods on the Citeseer dataset, achieving the highest macro $F_1$ and micro $F_1$-scores and exceeding baselines by at least 4.0\% and 3.7\%, respectively. On Cora-ml, DUPLEX follows closely behind NERD, securing the second-best results. This indicates that DUPLEX’s self-supervised node embeddings are well-suited for node classification tasks. In comparison, the supervised version, DUPLEX-S, lags behind, underscoring the value of the parameter-free decoder and self-supervised learning in improving task generalization. NERD’s notable performance can be attributed to its accounting for various node similarity types (both low and high-order). Capturing higher-order proximities indeed helps preserve global network structures more effectively, as nodes with shared neighbors are likely to exhibit similarity. For example, in citation networks, a higher overlap in cited references between two papers often indicates a stronger thematic connection between them. This is particularly advantageous for classification tasks, as highlighted in previous work~\cite{tang2015line}. DUPLEX, by stacking GAT layers, captures higher-order proximities effectively, achieving comparable or even superior results to NERD. Additionally, as discussed in detail in Appendix~\ref{app:nc_perf}, NERD employs a more flexible approach to aggregating neighbor information, thus avoiding information bottlenecks and enabling better representation of low-degree nodes, thereby achieving improved node classification performance.

\subsubsection{Inductive Node Classification}
\label{sec:ind_node_exp}

We extend our assessment of DUPLEX to an inductive setting, where testing nodes are entirely unseen during the training phase, a notable departure from our earlier transductive experiment. In this experiment, we conducted a comparative analysis of DUPLEX against three groups of baseline methods. Initially, due to the absence of open-source versions of inductive directed graph embedding methods like BLADE~\cite{Virinchi2023BLADEBN}, we contrast \textbf{DUPLEX} with transductive models such as \textbf{MagNet} and \textbf{SigMaNet}. For these models, we retain a transductive framework where only node labels were hidden in the testing set, providing these methods with additional training information. Secondly, we compare DUPLEX with SOTA inductive GNNs designed for directed graph node classification, such as \textbf{Dir-GNN}~\cite{rossi2024edge}. As Dir-GNN can be integrated with different backbone models, we consider SAGE~\cite{Hamilton2017InductiveRL} (denoted as \textbf{dir-SAGE}) and GAT~\cite{Velickovic2017GraphAN} (denoted as \textbf{dir-GAT}) here. Thirdly, we pit DUPLEX against SOTA inductive models for undirected graphs, such as the initial undirected \textbf{GAT} and \textbf{SAGE}, using these comparisons as ablation studies to emphasize the importance of directional information in DUPLEX. GAT and SAGE act as simplified incarnations of DUPLEX, with GAT missing the phase encoder and SAGE serving as the undirected graph aggregator analogue. Our experiments are conducted on Citeseer and Cora-ml, both consisting of attributed graphs. As initial embeddings, we employ the node attributes, which correspond to the word embeddings of the paper descriptions in these datasets. 

\begin{table}[t]
\vspace{-1.5ex}
\caption{Inductive node classification results.} 
\vspace{1ex}
\label{table:node_ins}
\tabcolsep = 0.1cm
\begin{center}
\resizebox{0.7\linewidth}{!}{
\begin{tabular}{lcccc}
\toprule
\multirow{2}{*}{\textbf{Method}} & \multicolumn{2}{c}{\textbf{Citeseer}}   & \multicolumn{2}{c}{\textbf{Cora-ml}}    \\
                                 & mac. $F_1$   & mic. $F_1$   & mac. $F_1$   & mic. $F_1$   \\ \midrule
SAGE & 70.4(1.2) & 74.3(1.0) & 74.5(3.2) & 81.0(2.4) \\
GAT & \underline{70.8(2.0)} & \underline{74.7(1.4)} & 77.1(2.8) & 83.9(1.7) \\
dir-SAGE & 70.2(2.6) & 74.1(1.2) & 83.4(2.2) & 85.6(1.9) \\
dir-GAT & 70.4(1.6)	& 74.6(0.6) & \underline{85.0(0.9)} & 86.4(0.7) \\
MagNet & 65.0(1.7) & 74.6(1.9) & 82.0(1.7) & 84.3(1.5) \\
SigMaNet & 63.1(1.3) & 68.8(1.0) & 82.7(1.0) & 83.5(1.2) \\
DUPLEX* & 68.3(1.3) & 74.0(0.7) & \underline{85.0(2.6)} & \underline{87.3(2.5)} \\
DUPLEX & \textbf{71.7(0.7)} & \textbf{75.4(0.5)} & \textbf{85.9(0.8)} & \textbf{87.6(0.9)} \\ \bottomrule
\end{tabular}}
\end{center}
\vspace{-6ex}
\end{table}

Table~\ref{table:node_ins} details the achieved macro and micro $F_1$-scores, demonstrating DUPLEX's superior performance. Outperforming transductive methods, which benefit from a more informative training phase, DUPLEX reports increases of at least 6.7\% and 3.2\% in terms of macro $F_1$-score. In comparison to inductive GNNs, DUPLEX shows enhancements of at least 1.3\% and 0.9\% for directed graph methods and at least 0.9\% and 8.8\% for undirected graph methods. These results underscore DUPLEX's robust inductive capabilities, adeptly managing unseen nodes and accurately classifying instances across a spectrum of categories. Furthermore, DUPLEX's significant outperformance over baselines on undirected graphs emphasizes its efficiency in utilizing directional information, an attribute integral to tasks involving directed graphs.

\subsection{Ablation Study}
\label{sec:ablation}
The results of our ablation study are summarized below, with comprehensive details provided in Appendix~\ref{app:ablation}: (1) \textbf{Dual GAT Encoder}: The amplitude and phase aggregators are essential for capturing both the connections and the directions within the graph. The dual encoder confers a performance advantage irrespective of the backbone model. (2) \textbf{Fusion Layer}: The ``mid-fusion'' approach is preferred for graphs lacking attributes; ``Fusion aggregation'' in Eq.~\eqref{eq:am_fu_update} demonstrates superioriority over a strategy that involves two distinct steps. (3) \textbf{Sensitivity Analysis}: The inclusion of the connection-aware loss can enhance the performance, but the enhancement decays when the weight $\lambda$ increases.

\section{Conclusion}
We propose DUPLEX for digraph embedding that yields high-quality embeddings especially for low in/out-degree nodes, adapts well to diverse downstream tasks, and generalizes effectively to unseen nodes. DUPLEX achieves these objectives through a dual encoder, two parameter-free decoders, and two self-supervised losses. Results show that it surpasses SOTA methods in link prediction, as well as transductive and inductive node classification tasks.

\section*{Acknowledgements}
We would like to thank Ant Group for their support for this work.

\section*{Impact Statement}
This paper presents work whose goal is to advance the field of graph neural networks. There are many potential societal consequences of our work, though we believe it is unnecessary to highlight them in this context.

\bibliography{example_paper}

\begin{thebibliography}{32}
\providecommand{\natexlab}[1]{#1}
\providecommand{\url}[1]{\texttt{#1}}
\expandafter\ifx\csname urlstyle\endcsname\relax
  \providecommand{\doi}[1]{doi: #1}\else
  \providecommand{\doi}{doi: \begingroup \urlstyle{rm}\Url}\fi

\bibitem[Fiorini et~al.(2022)Fiorini, Coniglio, Ciavotta, and Messina]{Fiorini2022SigMaNetOL}
Fiorini, S., Coniglio, S., Ciavotta, M., and Messina, E.
\newblock Sigmanet: one laplacian to rule them all.
\newblock In \emph{Proceedings of the AAAI Conference on Artificial Intelligence}, 2022.

\bibitem[Guo \& Mohar(2015)Guo and Mohar]{Guo2015HermitianAM}
Guo, K. and Mohar, B.
\newblock Hermitian adjacency matrix of digraphs and mixed graphs.
\newblock \emph{Journal of Graph Theory}, 85, 2015.

\bibitem[Hamilton et~al.(2017)Hamilton, Ying, and Leskovec]{Hamilton2017InductiveRL}
Hamilton, W.~L., Ying, R., and Leskovec, J.
\newblock Inductive representation learning on large graphs.
\newblock In \emph{Proceedings of the 31st International Conference on Neural Information Processing Systems}, pp.\  1025–1035, 2017.

\bibitem[Hohn(2013)]{hohn2013elementary}
Hohn, F.~E.
\newblock \emph{Elementary matrix algebra}.
\newblock Courier Corporation, 2013.

\bibitem[Hu et~al.(2017)Hu, Chan, and He]{hu2017deep}
Hu, P., Chan, K.~C., and He, T.
\newblock Deep graph clustering in social network.
\newblock In \emph{Proceedings of the 26th international conference on world wide web companion}, pp.\  1425--1426, 2017.

\bibitem[Khosla et~al.(2019)Khosla, Leonhardt, Nejdl, and Anand]{Khosla2018NodeRL}
Khosla, M., Leonhardt, J., Nejdl, W., and Anand, A.
\newblock Node representation learning for directed graphs.
\newblock In \emph{Machine Learning and Knowledge Discovery in Databases}, pp.\  395–411, 2019.

\bibitem[Kipf \& Welling(2017)Kipf and Welling]{kipf2017semisupervised}
Kipf, T.~N. and Welling, M.
\newblock Semi-supervised classification with graph convolutional networks.
\newblock In \emph{Proceedings of the 5th International Conference on Learning Representations}, 2017.

\bibitem[Ko et~al.(2023)Ko, Choi, and Kim]{Ko2023UniversalGC}
Ko, T., Choi, Y.~K., and Kim, C.-K.
\newblock Universal graph contrastive learning with a novel laplacian perturbation.
\newblock In \emph{Uncertainty in Artificial Intelligence}, 2023.

\bibitem[Kollias et~al.(2022)Kollias, Kalantzis, Id'e, Lozano, and Abe]{Kollias2022DirectedGA}
Kollias, G., Kalantzis, V., Id'e, T., Lozano, A.~C., and Abe, N.
\newblock Directed graph auto-encoders.
\newblock In \emph{Proceedings of the AAAI Conference on Artificial Intelligence}, 2022.

\bibitem[Lin \& Gao(2023)Lin and Gao]{Lin2022AMF}
Lin, L. and Gao, J.
\newblock A magnetic framelet-based convolutional neural network for directed graphs.
\newblock In \emph{IEEE International Conference on Acoustics, Speech and Signal Processing}, pp.\  1--5, 2023.

\bibitem[Liu \& Li(2015)Liu and Li]{liu2015hermitian}
Liu, J. and Li, X.
\newblock Hermitian-adjacency matrices and hermitian energies of mixed graphs.
\newblock \emph{Linear Algebra and its Applications}, 466:\penalty0 182--207, 2015.

\bibitem[Liu et~al.(2023)Liu, Chen, and Liu]{liu2023collaborative}
Liu, L., Chen, K.-J., and Liu, Z.
\newblock Collaborative bi-aggregation for directed graph embedding.
\newblock \emph{Neural Networks}, 164:\penalty0 707--718, 2023.

\bibitem[Ou et~al.(2016)Ou, Cui, Pei, Zhang, and Zhu]{Ou2016AsymmetricTP}
Ou, M., Cui, P., Pei, J., Zhang, Z., and Zhu, W.
\newblock Asymmetric transitivity preserving graph embedding.
\newblock In \emph{Proceedings of the 22nd ACM SIGKDD International Conference on Knowledge Discovery and Data Mining}, 2016.

\bibitem[Robert et~al.(1999)Robert, Casella, and Casella]{robert1999monte}
Robert, C.~P., Casella, G., and Casella, G.
\newblock \emph{Monte Carlo statistical methods}, volume~2.
\newblock Springer, 1999.

\bibitem[Rossi et~al.(2024)Rossi, Charpentier, Di~Giovanni, Frasca, G{\"u}nnemann, and Bronstein]{rossi2024edge}
Rossi, E., Charpentier, B., Di~Giovanni, F., Frasca, F., G{\"u}nnemann, S., and Bronstein, M.~M.
\newblock Edge directionality improves learning on heterophilic graphs.
\newblock In \emph{Learning on Graphs Conference}, pp.\  25--1, 2024.

\bibitem[Salha-Galvan et~al.(2019)Salha-Galvan, Limnios, Hennequin, Tran, and Vazirgiannis]{SalhaGalvan2019GravityInspiredGA}
Salha-Galvan, G., Limnios, S., Hennequin, R., Tran, V.-A., and Vazirgiannis, M.
\newblock Gravity-inspired graph autoencoders for directed link prediction.
\newblock In \emph{Proceedings of the 28th ACM International Conference on Information and Knowledge Management}, 2019.

\bibitem[Tang et~al.(2015)Tang, Qu, Wang, Zhang, Yan, and Mei]{tang2015line}
Tang, J., Qu, M., Wang, M., Zhang, M., Yan, J., and Mei, Q.
\newblock Line: large-scale information network embedding.
\newblock In \emph{Proceedings of the 24th International Conference on World Wide Web}, pp.\  1067--1077, 2015.

\bibitem[Velickovic et~al.(2018)Velickovic, Cucurull, Casanova, Romero, Li{\`{o}}, and Bengio]{Velickovic2017GraphAN}
Velickovic, P., Cucurull, G., Casanova, A., Romero, A., Li{\`{o}}, P., and Bengio, Y.
\newblock Graph attention networks.
\newblock In \emph{Proceedings of the 6th International Conference on Learning Representations}, 2018.

\bibitem[Virinchi \& Saladi(2023)Virinchi and Saladi]{Virinchi2023BLADEBN}
Virinchi, S. and Saladi, A.
\newblock Blade: biased neighborhood sampling based graph neural network for directed graphs.
\newblock In \emph{Proceedings of the Sixteenth ACM International Conference on Web Search and Data Mining}, 2023.

\bibitem[\v{S}ubelj \& Bajec(2013)\v{S}ubelj and Bajec]{ubelj2013ModelOC}
\v{S}ubelj, L. and Bajec, M.
\newblock Model of complex networks based on citation dynamics.
\newblock In \emph{Proceedings of the 22nd International Conference on World Wide Web}, pp.\  527–530, 2013.

\bibitem[Wu \& Han(2018)Wu and Han]{Wu2018MultimodalCF}
Wu, A. and Han, Y.
\newblock Multi-modal circulant fusion for video-to-language and backward.
\newblock In \emph{Proceedings of the 27th International Joint Conference on Artificial Intelligence}, pp.\  1029--1035, 2018.

\bibitem[Wu et~al.(2022)Wu, Sun, Zhang, Xie, and Cui]{wu2022graph}
Wu, S., Sun, F., Zhang, W., Xie, X., and Cui, B.
\newblock Graph neural networks in recommender systems: a survey.
\newblock \emph{ACM Computing Surveys}, 55\penalty0 (5):\penalty0 1--37, 2022.

\bibitem[Yang et~al.(2021)Yang, Zhang, Zhou, Wang, Sun, Zhong, Fang, Yu, and Qi]{yang2021financial}
Yang, S., Zhang, Z., Zhou, J., Wang, Y., Sun, W., Zhong, X., Fang, Y., Yu, Q., and Qi, Y.
\newblock Financial risk analysis for smes with graph-based supply chain mining.
\newblock In \emph{Proceedings of the 29th International Conference on International Joint Conferences on Artificial Intelligence}, pp.\  4661--4667, 2021.

\bibitem[Yoo et~al.(2023)Yoo, Lee, Shin, and Kim]{Yoo2023DisentanglingDB}
Yoo, H., Lee, Y.-C., Shin, K., and Kim, S.-W.
\newblock Disentangling degree-related biases and interest for out-of-distribution generalized directed network embedding.
\newblock In \emph{Proceedings of the 32nd International Conference on World Wide Web}, pp.\  231–239, 2023.

\bibitem[Yu et~al.(2022)Yu, Wu, and Dauwels]{yu2022efficient}
Yu, H., Wu, S., and Dauwels, J.
\newblock Efficient variational bayes learning of graphical models with smooth structural changes.
\newblock \emph{IEEE Transactions on Pattern Analysis and Machine Intelligence}, 45\penalty0 (1):\penalty0 475--488, 2022.

\bibitem[Yu et~al.(2017)Yu, Yu, Fan, and Tao]{Yu2017MultimodalFB}
Yu, Z., Yu, J., Fan, J., and Tao, D.
\newblock Multi-modal factorized bilinear pooling with co-attention learning for visual question answering.
\newblock In \emph{2017 IEEE International Conference on Computer Vision}, pp.\  1839--1848, 2017.

\bibitem[Yue et~al.(2020)Yue, Wang, Huang, Parthasarathy, Moosavinasab, Huang, Lin, Zhang, Zhang, and Sun]{yue2020graph}
Yue, X., Wang, Z., Huang, J., Parthasarathy, S., Moosavinasab, S., Huang, Y., Lin, S.~M., Zhang, W., Zhang, P., and Sun, H.
\newblock Graph embedding on biomedical networks: methods, applications and evaluations.
\newblock \emph{Bioinformatics}, 36\penalty0 (4):\penalty0 1241--1251, 2020.

\bibitem[Zhang et~al.(2021)Zhang, He, Brugnone, Perlmutter, and Hirn]{Zhang2021MagNetAN}
Zhang, X., He, Y., Brugnone, N., Perlmutter, M., and Hirn, M.~J.
\newblock Magnet: a neural network for directed graphs.
\newblock In \emph{Proceedings of the 35th International Conference on Neural Information Processing Systems}, volume~34, pp.\  27003--27015, 2021.

\bibitem[Zheng et~al.(2020)Zheng, Fan, Wang, and Qi]{zheng2020gman}
Zheng, C., Fan, X., Wang, C., and Qi, J.
\newblock Gman: A graph multi-attention network for traffic prediction.
\newblock In \emph{Proceedings of the AAAI Conference on Artificial Intelligence}, 2020.

\bibitem[Zhou et~al.(2017)Zhou, Liu, Liu, Liu, and Gao]{Zhou2017ScalableGE}
Zhou, C., Liu, Y., Liu, X., Liu, Z., and Gao, J.
\newblock Scalable graph embedding for asymmetric proximity.
\newblock In \emph{Proceedings of the AAAI Conference on Artificial Intelligence}, 2017.

\bibitem[Zhou et~al.(2021)Zhou, Chegu, Sohn, Fu, de~Melo, and Kapadia]{Zhou2021DHYPRHN}
Zhou, H., Chegu, A., Sohn, S.~S., Fu, Z., de~Melo, G., and Kapadia, M.
\newblock D-hypr: harnessing neighborhood modeling and asymmetry preservation for digraph representation learning.
\newblock In \emph{Proceedings of the 31st ACM International Conference on Information \& Knowledge Management}, 2021.

\bibitem[Zhu et~al.(2020)Zhu, Li, Peng, Wang, Yu, and He]{Zhu2020AdversarialDG}
Zhu, S., Li, J., Peng, H., Wang, S., Yu, P.~S., and He, L.
\newblock Adversarial directed graph embedding.
\newblock In \emph{Proceedings of the AAAI Conference on Artificial Intelligence}, 2020.

\end{thebibliography}
\bibliographystyle{icml2024}
\clearpage
\appendix
\onecolumn

\section{Symbols and Notations}

\begin{table*}[h]
\begin{center}
\caption{Symbols and notations used in this paper.}
\resizebox{0.5\linewidth}{!}{
\begin{tabular}{ll}
\toprule
Symbols         & Notations                                                                    \\ \midrule
$\gG$           & a digraph with nodes and edges                                        \\
$\gV$           & the set of nodes in the graph                                                \\
$\gE$           & the set of edges in the graph                                                \\
$u,v$           & two nodes on the graph for illustration                                       \\
$\mA$           & the adjacency matrix of graph                                                \\
$\vx_u$         & the embedding vector of nodes $u$                                                   \\
$\mX$           & the embedding matrix of nodes                                                \\
$\mH$           & the Hermitian adjacency matrix of graph                                      \\
$\mA_s$         & the amplitude matrix of HAM                                                   \\
$\mTheta$       & the phase matrix of HAM                                                          \\
$\va_u$         & the amplitude embedding of nodes $u$                                           \\
$\vtheta_u$     & the phase embedding of nodes $u$                                               \\
$\gN(u)$        & the neighbors of node $u$                                                    \\
$\sigma(\cdot)$ & the sigmoid function\\
$\phi(\cdot)$ & the activation function \\
$\gR$           & four relation types of edge                                                  \\
$r$             & a certain relation type of edge                                              \\
$q$  & the decay rate of the weight of connection-aware loss\\
$\lambda$ & the initial weight of the connection-aware loss\\
$d$ & embedding dims\\

\bottomrule
\end{tabular}}
\end{center}
\end{table*}

\section{Limitations of Dual Embedding Methods: An SVD Perspective} 
\label{app:SVD_drawback} 

We address a specific challenge that dual embedding methods face with nodes exhibiting low in-/out-degrees and rationalize this from a Singular Value Decomposition (SVD) standpoint. In this section, we introduce Lemma~\ref{le:rank} (see~\cite{hohn2013elementary}) and Lemma~\ref{le:myle}, provide a proof of Lemma~\ref{le:myle}, and summarize our insights.
\begin{lemma}[Sylvester's rank inequality] \label{le:rank} For matrices $\mA\in \sR^{m\times n}$ and $\mB\in \sR^{n\times s}$ with $\mA\mB=\m0$, it holds that $r(\mA) + r(\mB)\leq n$. \end{lemma}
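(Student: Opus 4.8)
The plan is to read the hypothesis $\mA\mB = \m0$ as a statement about linear maps and then invoke the rank--nullity theorem. Regard $\mA$ as the linear map $\sR^n \to \sR^m$, $\vx \mapsto \mA\vx$, and $\mB$ as the linear map $\sR^s \to \sR^n$, $\vy \mapsto \mB\vy$. The equation $\mA\mB = \m0$ says precisely that $\mA(\mB\vy) = \vzero$ for every $\vy \in \sR^s$; equivalently, every column of $\mB$ lies in $\ker\mA := \{\vx \in \sR^n : \mA\vx = \vzero\}$. So the first step is to translate the matrix identity into the subspace containment $\operatorname{col}(\mB) \subseteq \ker\mA$.

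From there, since $\operatorname{col}(\mB)$ is a subspace of $\ker\mA$ and $\dim\operatorname{col}(\mB) = r(\mB)$, we get $r(\mB) \le \dim\ker\mA$. The rank--nullity theorem applied to $\mA$, whose domain $\sR^n$ has dimension $n$, gives $\dim\ker\mA = n - r(\mA)$. Substituting yields $r(\mB) \le n - r(\mA)$, which rearranges to $r(\mA) + r(\mB) \le n$, as claimed. The whole argument is three lines once the translation in the previous paragraph is in place.

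There is essentially no obstacle here; the only point requiring a moment's care is bookkeeping the ambient dimension — the shared inner dimension $n$ is simultaneously the codomain of $\mB$ and the domain of $\mA$, which is exactly what makes the containment $\operatorname{col}(\mB) \subseteq \ker\mA$ well-posed. An alternative, equally short route is to cite the general Sylvester rank inequality $r(\mA\mB) \ge r(\mA) + r(\mB) - n$ and specialize to $r(\mA\mB) = r(\m0) = 0$; but since that inequality is itself normally proved by precisely the rank--nullity argument above, I would present the direct version and, if desired, simply reference~\cite{hohn2013elementary} for the general statement.
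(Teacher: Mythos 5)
Your proof is correct. Note that the paper does not actually prove this lemma itself---it states it as a known result and cites~\cite{hohn2013elementary}, reserving its own proof effort for Lemma~\ref{le:myle}---so there is no in-paper argument to diverge from. Your rank--nullity route (from $\mA\mB=\m0$ deduce $\operatorname{col}(\mB)\subseteq\ker\mA$, hence $r(\mB)\le\dim\ker\mA=n-r(\mA)$) is the standard textbook proof of exactly this special case and completely fills the gap left by the citation; the alternative of specializing the general inequality $r(\mA\mB)\ge r(\mA)+r(\mB)-n$ is equivalent, as you say, since that inequality reduces to the same rank--nullity argument.
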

In adjacency matrices, nodes with zero out-degree correspond to rows filled with zeros, while zero in-degree nodes match with columns of zeros. Focusing on the zero out-degree nodes, we establish that:
\begin{lemma}
\label{le:myle}
    If an asymmetric matrix has several all-zero rows, the corresponding rows in its source embedding matrix are all-zero.
\end{lemma}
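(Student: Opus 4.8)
The plan is to reduce the claim to a single application of Sylvester's rank inequality (Lemma~\ref{le:rank}), once the zero rows have been isolated. I will work with the dual factorization in row convention, $\mA=\mX_s\mX_t^\intercal$ with $\mX_s,\mX_t\in\sR^{n\times d}$, so that the $u$-th rows of $\mX_s$ and $\mX_t$ are the source and target embeddings of node $u$; this is the same object as the factorization $\mA=\mX_s^\intercal\mX_t$ of Section~\ref{sec:hermitian-at-cons}, up to transposing the embedding matrices. I will also invoke the (mild, and in the SVD construction automatic) non-degeneracy hypothesis that the factorization has full rank, i.e. $r(\mA)=d$. Some such hypothesis is genuinely necessary: for $\mA=\m0$ one can take $\mX_s$ to be any matrix and $\mX_t=\m0$, so the bare identity $\mA=\mX_s\mX_t^\intercal$ alone does not force $\mX_s$ to vanish on the zero rows.

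Next I would fix notation for the zero rows. Let $S\subseteq\{1,\dots,n\}$ be the set of indices of the all-zero rows of $\mA$ (these are precisely the zero-out-degree nodes), put $m=|S|$, and let $\mX_s^{S}\in\sR^{m\times d}$ denote the submatrix of $\mX_s$ formed by the rows indexed by $S$. Reading off those rows of the identity $\mA=\mX_s\mX_t^\intercal$ yields the matrix equation $\mX_s^{S}\,\mX_t^\intercal=\m0$, an $m\times n$ zero matrix.

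I would then apply Lemma~\ref{le:rank} to the product $\mX_s^{S}\cdot\mX_t^\intercal=\m0$, whose inner dimension is $d$, to get $r(\mX_s^{S})+r(\mX_t^\intercal)\le d$. The remaining point is that $r(\mX_t^\intercal)=d$: indeed $r(\mX_t^\intercal)=r(\mX_t)\ge r(\mX_s\mX_t^\intercal)=r(\mA)=d$, while trivially $r(\mX_t)\le d$, so $r(\mX_t)=d$. Substituting back gives $r(\mX_s^{S})\le d-d=0$, hence $\mX_s^{S}=\m0$; that is, the source embeddings of all zero-out-degree nodes are identically zero, which is exactly the assertion. I would close by noting that the symmetric statement (zero-in-degree nodes have vanishing target embeddings) follows verbatim by applying the argument to $\mA^\intercal=\mX_t\mX_s^\intercal$.

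The one place that needs care — the "main obstacle" — is the full-rank hypothesis, and I would address it rather than assume it away. When $\mX_s=\mU_d\mSigma_d^{1/2}$ and $\mX_t=\mV_d\mSigma_d^{1/2}$ arise from the truncated SVD $\mA\approx\mU_d\mSigma_d\mV_d^\intercal$: if $\sigma_d>0$ then the columns of $\mV_d$ are orthonormal, so $r(\mX_t)=d$ comes for free and the argument above applies directly; and for any components with $\sigma_j=0$ the corresponding columns of $\mX_s$ are already zero, so discarding them reduces to the positive-singular-value case and the conclusion survives even when $r(\mA)<d$. Thus the lemma holds for the standard SVD-based dual embeddings without any additional assumption, and in the generic exact-factorization setting under the natural condition $r(\mA)=d$.
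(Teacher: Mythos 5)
Your argument is essentially the same as the paper's: both isolate the block of zero rows, observe that multiplying the "other" embedding factor by this block gives a zero matrix, and then apply Sylvester's rank inequality together with the fact that the non-truncated factor has full rank $d$. The paper derives this directly from the SVD construction ($\mU_2\Sigma\mV^T=\m0$ with $r(\Sigma\mV^T)=d$ because the top $d$ singular values are assumed non-zero), whereas you phrase it for an arbitrary factorization $\mA=\mX_s\mX_t^\intercal$ under the hypothesis $r(\mA)=d$ and then verify the SVD case as a corollary. Your version is a bit cleaner on the edges: you explicitly flag the non-degeneracy hypothesis as necessary (with the $\mA=\m0$ counterexample), and you explain why the conclusion survives when $r(\mA)<d$ by discarding the columns of $\mX_s$ attached to zero singular values, which the paper glosses over with the phrase "non-zero and distinct." Mathematically the two proofs coincide; yours is slightly more general in statement and more careful in bookkeeping.
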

\begin{proof}
Let $\mA$ be an asymmetric matrix with several zero rows. We can represent it as follows by renumbering nodes, that is, by interchanging the rows and columns:
\begin{equation}
    \mA = \begin{bmatrix} \mA'\\\m0 \end{bmatrix},
\end{equation}
where $\mA'$ is a submatrix of $\mA$ consisting of all non-zero rows. We further perform the singular value decomposition (SVD) on $\mA$:
\begin{equation}
    \begin{bmatrix} \mA'\\\m0 \end{bmatrix} = \begin{bmatrix} \mU_1\\\mU_2 \end{bmatrix}\Sigma \mV^T,
\end{equation}
where
\begin{equation}
\label{eq:u1}
    \mA' = \mU_1\Sigma \mV^T,
\end{equation}
and
\begin{equation}
    \m0 = \mU_2\Sigma \mV^T.
\end{equation}

It is important to note that $\mA'$ is the predominantly optimized component, encompassing the majority of the training samples. Specifically, our emphasis lies on optimizing Eq.~\eqref{eq:u1} by adjusting its singular values and vectors. In the context of low-rank decomposition, we consider the first $d$ largest singular values, which are both non-zero and distinct, thus the corresponding singular vectors are orthogonal.

With Lemma~\ref{le:rank}, we have:
\begin{equation}
r(\mU_2)+r(\Sigma \mV)<=d, 
\end{equation}
where $r(\Sigma \mV)=d$ as the column vectors of $\mV$ are orthogonal. Consequently, we deduce that $r(\mU_2)=\m0$, that is,
\begin{equation}
    \mU_2=\m0.
\end{equation}
For the dual embedding methods, the source embedding matrix can be written as
\begin{equation}
    \mX_s=\begin{bmatrix} \mU_1\\\mU_2 \end{bmatrix}\Sigma^{1/2}=\begin{bmatrix} \mU_1\\\m0 \end{bmatrix}\Sigma^{1/2},
\end{equation}
that is, the corresponding rows in the source embedding matrix are all-zero.
\end{proof}

Similarly, it can be shown that an asymmetric matrix with columns of zeros will have target embedding matrix columns filled with zeros, leading to zero target/source embeddings for nodes with zero in-/out-degrees.
Examining nodes with low in-/out-degrees, and taking low out-degree nodes as an example, the adjacency matrix is divided into high and low out-degree segments: 
\begin{equation} 
\mA = \begin{bmatrix} \mA'\\ \mO \end{bmatrix}, 
\end{equation} 
with $\mA'$ representing high out-degree nodes and $\mO$ for low out-degree nodes. Suppose that errors are uniformly distributed across both segments during optimization (i.e., reconstruction of $\mA$), and these errors set non-zero entries in $\mA$ to zero. This affects low-degree nodes more significantly, potentially pushing their embeddings toward zero and causing them to cluster in the embedding space, as cautioned by Lemma~\ref{le:myle}.
Although this issue is not rigorously proven for low-degree nodes and remains a topic for further exploration, we can assert that the HAM does not suffer from this problem as it has all-zero rows or columns only for isolated nodes with both zero in and out-degree, which are outside the scope of our analysis.

\section{Graph Attention Networks}
GAT~\cite{Velickovic2017GraphAN} is a successful practice of incorporating attention mechanisms into graph neural networks. Specifically, GAT assigns attention coefficients $\alpha_{uv}$ to each connected pair of nodes $(u,v)$ as weights during neighbor aggregation. In DUPLEX, we use GAT as our backbone model for both amplitude and phase encoders. Taking the amplitude GAT as an example, we further write Eq.~\eqref{eq:am_agg} as:
\begin{equation}
    \label{eq:att}
    \sum_{v\in\gN(u)}f_a(\va_u,\va_v)\psi_a(\va_v) = \sum_{v\in{\gN(u)}}\alpha_{uv}\mW_a\va_v,
\end{equation}
where the attention coefficient,
\begin{equation}
    \label{eq:att_alpha}
    \alpha_{uv} = \frac{\exp(\text{LeakyReLU}(\vb_a^\intercal[\mW_a\va_u||\mW_a\va_v]))}{\sum_{g\in \gN(u)}\exp(\text{LeakyReLU}(\vb_a^\intercal[\mW_a\va_u||\mW_a\va_g]))}.
\end{equation}
Here, $\text{exp}(\cdot)$ is the exponential operation, $\text{LeakyReLU}(\cdot)$ is the activation function, $||$ is the concatenation operation, $\vb_a\in \sR^{2d\times 1}$ is the attention parameters of the amplitude encoder, and $\mW_a\in \sR^{d\times d}$ is the feature transformation parameters.

\section{Complexity Analysis}
\label{app:complexity}
Assuming the network has $L$ layers in total, the embedding dimension is $d$, with $|\gV|$ nodes and $|\gE|$ edges in the graph. In our method, for each head of the attention mechanism, the time complexity of the encoding process is $O(L|\gV|d^2+L|\gE|d)$, and the space complexity is $O(|\gE|+Ld^2+L|\gV|d+Ld)$. The time complexity of the decoding process is $O(|\gE|d)$, and the space complexity is $O(|\gV|d+|\gE|)$. The total number of parameters in the model is $2Ld^2+4Ld$.

We further compare the dual GAT encoder of DUPLEX with a single GAT encoder that directly produces a $d$-dimensional vector of complex numbers for each node in each layer, where the complex vector is often converted to a $2d$-dimensional real-valued vector without separating the amplitude and phase. The complexity of feature transformation in each layer of the dual GAT encoder, encompassing both amplitude and phase transformations, amounts to $2|\gV|d^2$. In contrast, for the single GAT encoder, the complexity for the $|\gV|\times 2d$ feature matrix is $4|\gV|d^2$. This reveals that our dual GAT encoder reduces feature transformation complexity by half, while maintaining the same complexity for other operations.

\section{More Details on Datasets}
\label{app:datasets}
We perform our experiments on five open-source digraph datasets, including Cora-ml, Citeseer, Cora, Epinions and Twitter. \textbf{Citeseer} and \textbf{Cora} are two popular citation networks, \textbf{Cora-ml} is a subset of Cora dataset in the field of machine learning. The Cora-ml and Citeseer datasets provide meaningful node features, with the node labels corresponding to scientific subareas. \textbf{Epinions} and \textbf{Twitter} are two social networks. We use the version of \textbf{Citeseer} and \textbf{Cora-ml} dataset provided by~\cite{Zhang2021MagNetAN}, \textbf{Cora} dataset provided by~\cite{ubelj2013ModelOC} and other two datasets from the Stanford Large Network Dataset Collection\footnote{\url{https://snap.stanford.edu/data/}}. The overview of the datasets is shown in Table~\ref{table:dataset}. The graph datasets used in our study are unweighted and directed. These datasets encompass a wide range of graph sizes, with the number of nodes varying from $3K$ to $465K$, and the number of edges ranging from $4K$ to $834K$.

The direction information is quite important in these datasets. For example, consider two nodes in a citation network. Node \textit{a} has many in-neighbors, indicating that it is a popular paper cited by many other papers. On the other hand, node \textit{b} has many out-neighbors, indicating that it is a knowledgeable paper that cites many other papers. These two papers have completely different characteristics. However, when using undirected graph embedding methods, as both nodes are connected to many other nodes, they would be considered to have similar structural features in the graph, which can mislead downstream tasks.
Similarly, in a social network, if A follows several people who all follow B, it is more likely that A will follow B rather than the other way around. Undirected graph embedding methods would consider A and B to have similar structural features and fail to distinguish their asymmetrical relationship, leading to potential errors in recommendations and other tasks.

\section{Implementation Details}
\label{app:implementation}
We compare DUPLEX with several SOTA models, where NERD, DGGAN, DiGAE and ODIN learn a dual embedding while DUPLEX, MagNet and SigMaNet learn a single embedding for each node. During experiments, the dual embedding methods learn two 128-dimensional real-valued embeddings for each node, while the single embedding methods learn a 128-dimensional complex embedding for each node.

We implemented DUPLEX using DGL and PyTorch, employing Adam optimizer with a learning rate of 1e-3. DUPLEX consists of two 3-layer GATs with attention head 1, one for the amplitude embedding and the other for the phase embedding. We sampled four types of node pairs (forward edge, reverse edge, bidirectional edge, and no edge) at a ratio 1:1:1:$x$ for self-supervised loss computation, where $x$ tends to be small due to fewer bidirectional edges, and in our experiments we just control $x \leq 1$. We set the hidden dim to 128 and the dropout rate to 0.5. We tuned the initial loss weight $\lambda$ in \{0.1,0.3\} and the decay rate $q$ in \{0,1e-4,1e-2\}. We ran our model with maximum 3000 epochs with early stopping for all experiments.

For the baseline methods, we utilize the publicly available code repositories, and tune the hyperparameters as follows. 
\begin{itemize}[leftmargin=1em,itemsep= 1pt,topsep = 1pt,partopsep=0pt]
    \item NERD is a random-walk based model, for which we tune the walk size in \{1, 4, 7\}, the start learning rate in \{0.001, 0.01, 0.1\} and the negative samples in \{1, 4, 7\}.
    \item DGGAN is a GAN-based model, for which we set the default parameters as the paper.
    \item ODIN is a shallow method incorporating multiple losses with distinct objectives, where we tune the disentangling loss weights in \{0.3, 0.5, 0.7\} and the negative sampling rates in \{1, 4, 7\}.
    \item MagNet is a GCN-based model, where we set the GCN layer as 3, hidden dim as 128, and tune the parameter q, which controls the importance of directed information, in \{0.05, 0.1, 0.15, 0.2, 0.25\}.
    \item DiGAE is a digraph autoencoder model which set directed GCN as the encoder, for which we tune the hyperparameter $\alpha, \beta$ in \{0.3, 0.5, 0.7\}.
    \item SigMaNet is a GCN-based model, where we set the GCN layer as 3 and the hidden dim as 128.
\end{itemize}

\section{Link Prediction Set-up}
In Section~\ref{sec:lp}, we assess the model's ability to discriminate between different edge types and consider four commonly-used subtasks. We list the details of each subtask here.
\label{app:lp-setup}
\begin{enumerate}[leftmargin=1em,itemsep= 1pt,topsep = 1pt,partopsep=0pt]
 \item Existence Prediction (EP)~\cite{Zhu2020AdversarialDG}: This subtask involves predicting whether a specific edge exists in the graph or not. In the testing sets, the ratio of edges belonging to the two types is 1:1. Note that both reverse and non-existent edges are categorized as non-existence~\cite{Zhu2020AdversarialDG}, and their ratio is also 1:1 in the testing set.

 \item Direction Prediction (DP)~\cite{Zhu2020AdversarialDG, Zhang2021MagNetAN}: The objective of this subtask is to predict the direction of unidirectional edges (i.e., forward or reverse), with a ratio of 1:1 in the testing sets.

 \item Three-type Classification (TP)~\cite{Zhang2021MagNetAN}: This subtask involves classifying edges into three types: positive, reverse, or non-existent, with a ratio of 1:1:1 in the testing set.

 \item Four-type Classification (FP): In this subtask, the goal is to classify edges into four categories: positive, reverse, bidirectional, or non-existent. The ratio for the four types of edges in the testing set is set to 1:1:1:1. In cases where the number of bidirectional edges is smaller than the other types in certain datasets, we utilize all available bidirectional edges in the testing set.
\end{enumerate}

\section{Node Classification Performance}
\label{app:nc_perf}
The ease of classifying nodes is closely tied to their degrees. It is widely acknowledged that nodes with ample accessible information, whether emanating from structural linkages or intrinsic nodal attributes, are typically more amenable to effective classification. When nodes are bolstered by reliable attributes, these specific individualistic features can significantly diminish the relative importance of structural information in achieving separability between different node classes. Consequently, even nodes with a low degree have the potential to be accurately classified based solely on their distinctive attributes. In contrast, when such nodal attributes are absent or lack distinctive information, the graph's structural properties gain prominence as a critical discriminative element. In this context, nodes characterized by a scant number of connections, or a low in/out degree, inherently face challenging classification scenarios attributed to their sparse connectivity within the overall graph structure.

Addressing these challenges, DUPLEX brings forth a flexible encoding approach that dynamically assimilates information within the graph. Recall that one common embedding method for directed graphs employs the source and target embeddings~\cite{Khosla2018NodeRL, Kollias2022DirectedGA, Yoo2023DisentanglingDB}. To finish the node classification task, the concatenation of the source and target embeddings is used, that is, $[\vs,\vt]$. We notice that typically $\vs = f(\vx_{\text{out}})$ and $\vt = f(\vx_{\text{in}})$. In other words, the source embeddings only aggregate information from the out-neighbors, while the target embeddings only aggregate information from the in-neighbors. Information from both types of neighbors can only be exchanged in the final concatenation stage. By contrast, DUPLEX's design innovatively facilitates information fusion between both types of neighbors at every layer. This occurs through the coordinated use of its amplitude encoder with an undirected graph aggregator and its phase encoder with a directed graph aggregator, circumventing the information bottleneck commonly found in previous approaches.

Furthermore, since we employ the attention mechanism for the sake of aggregation, it can learn the attention score between two nodes adaptively from the data. When this score approaches zero, the information exchange is blocked. For graphs with reliable attributes, DUPLEX possesses the capability to truncate the exchange of information between nodes by modulating attention scores that approach zero and sorely relies on the node attributes to achieve good classification performance. Conversely, in the case when the graph structure plays a more important role, DUPLEX can aggregate information from both in and out neighbors by maintaining higher attention scores, aiding low-degree nodes in overcoming their inherent informational disadvantage.

\section{The Convergence of Self-supervised Training}
\label{app:loss}

The training of DUPLEX via self-supervised loss relies on the principles of stochastic gradient descent (SGD), a well-established optimization method with established theoretical foundations ensuring convergence under mild conditions. Specifically, based on the theory of stochastic gradient descent~\cite{robert1999monte}, using a schedule of the learning rates ${\rho_t}$ such that $\sum\rho_t = \infty$ and $\sum\rho_t^2 < \infty$, the training process will converge to a local minimum of the DUPLEX loss function.

In addition to this theoretical assurance, we capture and illustrate the progression of the loss function's convergence over the course of the training period through visual plots. Likewise, we track the mean square error between the HAM approximated by DUPLEX and the actual ground truth HAM, with these findings detailed in Figure~\ref{fig:loss}. The graphical representations of these metrics clearly depict the effective convergence of the model during training, which corroborates the theoretical underpinnings of our approach.

\begin{figure*}[h]
\begin{center}
\includegraphics[width=0.5
\textwidth]{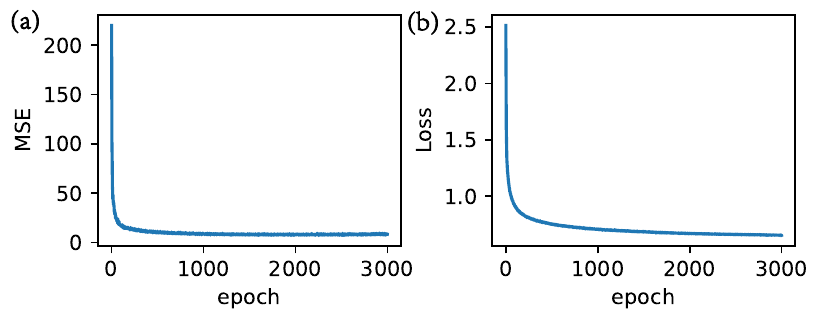}
\end{center}
\vspace{-2ex}
\caption{(a) Loss curve. (b) Mean square error between approximated HAM and the ground truth.}
\label{fig:loss}
\end{figure*}

\section{Ablation Study}
\label{app:ablation}
In this section, we perform ablation studies to delve into analyzing the role of each component in DUPLEX.

\subsection{Dual GAT Encoder}
\label{app:dual_gat_encoder}

We consider two variants of the DUPLEX: \textbf{DUPLEX-am}, which employs the undirected graph aggregator $\oplus$ (see Eq.~\eqref{eq:am_agg}) for both amplitude and phase embeddings, and \textbf{DUPLEX-ph}, which instead utilizes the digraph aggregator $\ovset{\rightharpoonup}{\oplus}$ (see Eq.~\eqref{eq:ph_agg}) for both types of embeddings. We conduct the four subtasks of link prediction on the Citeseer and Cora datasets, following the same settings in Section~\ref{sec:lp}. As shown in Table~\ref{table:dual_gnn}, DUPLEX outperforms both DUPLEX-am and DUPLEX-ph, underscoring the necessity of using the directed and undirected graph aggregator respectively for phase and amplitude embeddings. DUPLEX-am performs poorly in all tasks due to the omission of direction. Interestingly, DUPLEX-ph performs well in the direction prediction subtask, suggesting the effectiveness of the proposed directed aggregator for capturing the direction information.

\begin{table}[h]
\caption{Link prediction ACC and AUC (\%) of four subtasks. The best results are in \textbf{bold} and the second are \underline{underlined}.
}
\label{table:dual_gnn}
\begin{center}
\resizebox{0.6\linewidth}{!}{
\begin{tabular}{llcccccc}
\toprule
\multirow{2}{*}{\textbf{Dataset}} & \multirow{2}{*}{\textbf{Method}} & \multicolumn{2}{c}{\textbf{EP}}     & \multicolumn{2}{c}{\textbf{DP}}     & \textbf{TP}    & \textbf{FP}    \\
                                  &                                  & AUC                & ACC                & AUC                & ACC                & ACC                & ACC                \\ \midrule
\multirow{3}{*}{Citeseer} & DUPLEX-am & 86.2(3.9) & 79.3(5.1) & 93.2(1.1) & 88.4(1.9) & 73.4(2.7) & 67.7(3.0) \\
 & DUPLEX-ph & \underline{93.7(1.2)} & \underline{91.8(1.2)} & \textbf{99.8(0.0)} & \textbf{99.0(0.1)} & \underline{82.9(1.0)} & \underline{80.8(1.0)} \\
 & DUPLEX & \textbf{98.6(0.4)} & \textbf{95.7(0.5)} & \underline{99.7(0.2)} & \underline{98.7(0.4)} & \textbf{94.8(0.2)} & \textbf{91.1(1.0)} \\ \midrule
\multirow{3}{*}{Cora} & DUPLEX-am & 89.5(0.2) & 87.0(0.2) & 88.8(0.8) & 86.7(0.3) & 85.1(1.3) & 81.8(1.8) \\
 & DUPLEX-ph & \underline{95.2(0.1)} & \underline{91.2(0.1)} & \textbf{98.4(0.2)} & \textbf{97.2(0.2)} & \underline{88.3(0.2)} & \underline{83.7(0.7)} \\
 & DUPLEX & \textbf{95.9(0.1)} & \textbf{93.2(0.1)} & \underline{97.2(0.2)} & \underline{95.9(0.1)} & \textbf{92.2(0.1)} & \textbf{88.4(0.4)} \\ \bottomrule
\end{tabular}}
\end{center}
\end{table}

Moreover, to demonstrate the robustness of the dual encoder design with respect to different backbone GNNs, we replace the GAT backbone with the spatial GCN~\cite{kipf2017semisupervised, Hamilton2017InductiveRL}, where the aggregation function is:
\begin{equation}
 \label{eq:am_agg_gcn}
  \oplus\big(\{\va_v,\forall v\in\gN(u)\}\big) = \sum_{v\in\gN(u)}\psi_a(\va_v),
\end{equation}
and
\begin{equation}
    \label{eq:ph_agg_gcn}
    \ovset{\rightharpoonup}{\oplus}\big(\{\vtheta_v,\forall v\in\gN(u)\}\big)=\sum_{v\in\gN_{\text{in}}(u)}\psi_\theta(\vtheta_v)-\sum_{v\in\gN_{\text{out}}(u)}\psi_\theta(\vtheta_v).
\end{equation}
The difference is that there is no need for computation of attention coefficients. 

We repeat our experiments concerning link prediction, transductive and inductive node classification. The experiment results are shown in Table~\ref{table:LP-backbone}-\ref{table:NCI-backbone}. It can be observed that DUPLEX with a GAT backbone consistently outperforms the counterpart with a GCN backbone, due to the higher flexibility of GAT. However, even with the GCN backbone, DUPLEX still surpasses other SOTA methods across various tasks, highlighting the advantage of the dual encoder design regardless of the backbone. 

\begin{table*}[h]
\caption{Link prediction ACC (\%) for four subtasks. The best results are in \textbf{bold} and the second are \underline{underlined}. Note that `EP' represents the `Existence Prediction' subtask, while `DP' for direction Prediction, `TP' for three-type classification and `FP' for four-type classification.}
\label{table:LP-backbone}
\tabcolsep = 0.1cm
\begin{center}
\resizebox{0.65\linewidth}{!}{\begin{tabular}{lcccccccc}
\toprule
\multirow{2}{*}{\textbf{Method}} & \multicolumn{4}{c}{\textbf{Cora}}                                                 & \multicolumn{4}{c}{\textbf{Epinions}}                                             \\
                                 & EP                 & DP                 & TP                 & FP                 & EP                 & DP                 & TP                 & FP                 \\ \midrule
DUPLEX*(GCN) & 92.2(0.3) & 95.2(0.2) & 90.8(0.2) & 86.8(0.0) & 83.3(0.2) & 91.8(0.1) & 87.3(0.5) & 73.1(0.1) \\
DUPLEX*(GAT) & \underline{93.1(0.1)} & 95.5(0.3) & \underline{92.1(0.1)} & \underline{88.3(0.2)} & 84.5(0.5) & 92.2(0.0) & 88.9(0.1) & 74.9(0.6) \\ \midrule
DUPLEX(GCN) & 93.0(0.1) & \underline{95.6(0.1)} & 91.9(0.3) & 88.1(0.4) & \underline{84.7(0.2)} & \underline{92.5(0.0)} & \underline{87.6(0.3)} & \underline{75.2(0.2)} \\
DUPLEX(GAT) & \textbf{93.2(0.1)} & \textbf{95.9(0.1)} & \textbf{92.2(0.1)} & \textbf{88.4(0.4)} & \textbf{85.5(0.0)} & \textbf{92.6(0.1)} & \textbf{88.9(0.0)} & \textbf{76.4(0.2)} \\ \bottomrule
\end{tabular}}
\end{center}
\vspace{-3ex}
\end{table*}

\begin{table*}[h]
\begin{minipage}[c]{0.47\linewidth}
    \centering
    \caption{Transductive Node classification result (\%) with self-supervised training. }
    \vspace{1ex}
    \label{table:NCT-backbone}
    \tabcolsep = 0.1cm
    \resizebox{0.8\linewidth}{!}{\begin{tabular}{lcccc}
    \toprule
    \multirow{2}{*}{\textbf{Method}} & \multicolumn{2}{c}{\textbf{Citeseer}}   & \multicolumn{2}{c}{\textbf{Cora-ml}}    \\
                                     & mac. $F_1$   & mic. $F_1$   & mac. $F_1$   & mic. $F_1$   \\ \midrule
    DUPLEX*(GCN) & 49.0(1.3) & 52.1(1.1) & 73.9(0.3) & 76.2(0.3) \\
    DUPLEX*(GAT) & \underline{51.2(0.1)} & \underline{54.3(2.0)} & \underline{76.0(1.3)} & \underline{77.8(1.6)} \\ \midrule
    DUPLEX(GCN) & 51.2(1.0) & 54.0(1.0) & 75.5(0.7) & 77.7(0.5) \\
    DUPLEX(GAT) & \textbf{53.0(2.5)} & \textbf{56.2(1.7)} & \textbf{77.9(0.6)} & \textbf{79.8(0.7)}    \\ \bottomrule
    \end{tabular}}
    \end{minipage}\hfill
    \begin{minipage}[c]{0.47\linewidth}
        \centering
        \caption{Inductive node classification macro $F_1$ (\%) and micro $F_1$ (\%) with supervised training.}
        \vspace{1ex}
        \label{table:NCI-backbone}
        \tabcolsep = 0.1cm
        \resizebox{0.8\linewidth}{!}{\begin{tabular}{lcccc}
        \toprule
        \multirow{2}{*}{\textbf{Method}} & \multicolumn{2}{c}{\textbf{Citeseer}}   & \multicolumn{2}{c}{\textbf{Cora-ml}}    \\
                                         & mac. $F_1$   & mic. $F_1$   & mac. $F_1$   & mic. $F_1$   \\ \midrule
        DUPLEX*(GCN) & \underline{70.9(0.6)} & \underline{74.9(0.9)} & 82.2(1.5) & 84.8(1.1) \\
        DUPLEX*(GAT) & 68.3(1.3) & 74.0(0.7) & \underline{85.0(2.6)} & \underline{87.3(2.5)} \\ \midrule
        DUPLEX(GCN) & 68.9(2.7) & 74.7(0.9) & 82.6(2.1) & 85.0(1.1) \\
        DUPLEX(GAT) & \textbf{71.7(0.7)} & \textbf{75.4(0.5)} & \textbf{85.9(0.8)} & \textbf{87.6(0.9)} \\
        \bottomrule
        \end{tabular}}
\end{minipage}
\vspace{-3ex}
\end{table*}

\subsection{Fusion Layer}
\label{app:fusion}

We can tell that the fusion layer can enhance the performance of DUPLEX across all tasks and datasets by comparing DUPLEX with DUPLEX* in Tables~\ref{table:auc_acc}-\ref{table:node_ins}, implying the importance of utilizing the complementary information in the amplitude and phase encoder. Despite this, DUPLEX* still achieves commendable results, competing favorably against benchmark methods. This empirical outcome underscores the robustness of DUPLEX's design, even when the encoders operate in isolation without information fusion.

We further investigate the impact of the placement of the fusion layer on the performance of DUPLEX, considering four strategies: \textbf{early-fusion}, \textbf{mid-fusion}, \textbf{late-fusion}, and \textbf{all-fusion}. These strategies involve applying the fusion operation defined in Eq.~\eqref{eq:am_fu_update} at the input, middle, output, and all layers, respectively. Our investigation includes both non-attributed and attributed graphs, focusing on the task of node classification. Notably, in non-attributed graphs, the node embeddings are randomly initialized and trained in a self-supervised manner, while in attributed graphs, they are initialized using the node attributes and trained in a fully supervised manner. The results are shown in Figs.~\ref{fig:trans_layer}-\ref{fig:ind_layer}. For non-attributed graphs, the mid-fusion strategy achieves the best performance, in agreement with the analysis discussed in Section~\ref{ssec:fusion_layer}. On the other hand, for attributed graphs, the all-fusion strategy yields the highest performance. In this scenario, early-fusion does not introduce noise, as the node attributes hold meaningful information. Similarly, late-fusion does not compromise performance, given that the model is fully supervised for node classification, and the independence of amplitude and phase embeddings is not necessary. Consequently, fusion at all layers effectively integrates the valuable node attributes with the graph structure, resulting in improved performance.

\begin{figure*}[h]
\begin{center}
\includegraphics[width=0.85
\textwidth]{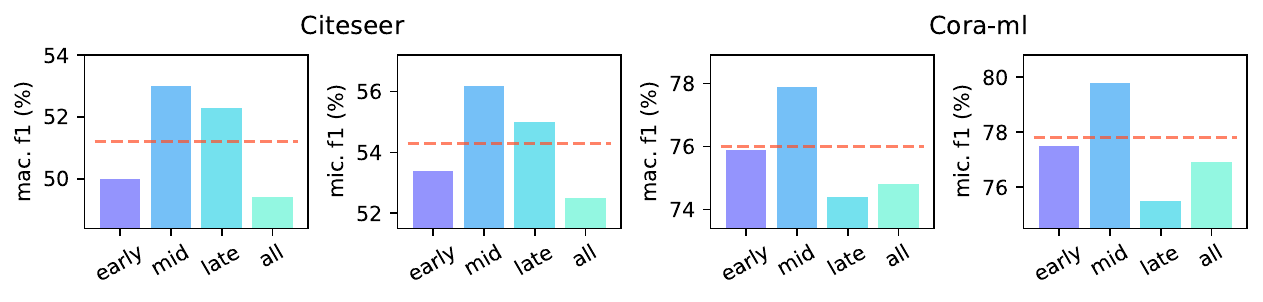}
\end{center}
\vspace{-2ex}
\caption{Node classification macro $F_1$ (\%) and micro $F_1$ (\%) with self-supervised training on randomly initialized graphs. The `early' represent for the `early-fusion' strategy, `mid' for `mid-fusion', `late' for `late-fusion', `all' for `all-fusion'. The dashed line is the baseline result with no fusion layer.}
\label{fig:trans_layer}
\end{figure*}

\begin{figure*}[h]
\begin{center}
\includegraphics[width=0.85
\textwidth]{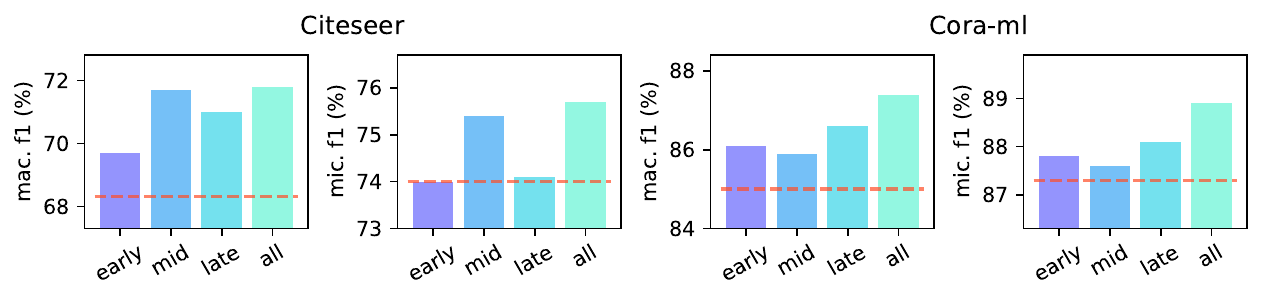}
\end{center}
\vspace{-2ex}
\caption{Node classification macro $F_1$ (\%) and micro $F_1$ (\%) with supervised training on feature-initialized graphs. The `early' represent for the `early-fusion' strategy, `mid' for `mid-fusion', `late' for `late-fusion', `all' for `all-fusion'. The dashed line is the baseline result with no fusion layer.}
\label{fig:ind_layer}
\end{figure*}

Additionally, to see whether integrating fusion with aggregation behaves better than separating fusion and aggregation operations, we explore another commonly-used fusion approach, known as the element-wise sum \cite{Wu2018MultimodalCF, Yu2017MultimodalFB}. This approach computes a weighted combination of the amplitude embedding and phase embedding of each layer: $a_u'=\phi\big(a_u+\psi_{a\theta}(\theta_u\big))$. Unlike our fusion method, this approach is independent of the graph structure and fuses the amplitude and phase embeddings of the same node after aggregation in each layer. The results for both transductive and inductive node classification can be found in Table~\ref{table:NCT-fusion}-\ref{table:NCI-fusion}. We can see that DUPLEX with the proposed fusion method consistently outperforms the element-wise sum approach. This superiority arises from its ability to better exploit the graph structure during fusion, facilitating a more effective exchange of information.

\begin{table*}[h]
\vspace{-0ex}
\begin{minipage}[c]{0.47\linewidth}
    \centering
    \caption{Transductive Node classification result (\%) with self-supervised training. The DUPLEX(EMS) represents DUPLEX with element-wise sum fusion.}
    \vspace{1ex}
    \label{table:NCT-fusion}
    \tabcolsep = 0.1cm
    \resizebox{0.8\linewidth}{!}{\begin{tabular}{lcccc}
    \toprule
    \multirow{2}{*}{\textbf{Method}} & \multicolumn{2}{c}{\textbf{Citeseer}}   & \multicolumn{2}{c}{\textbf{Cora-ml}}    \\
                                     & mac. $F_1$   & mic. $F_1$   & mac. $F_1$   & mic. $F_1$   \\ \midrule
    DUPLEX* & 51.2(0.1) & 54.3(2.0) & \underline{76.0(1.3)} & \underline{77.8(1.6)} \\
    DUPLEX(EWS) & \underline{52.7(1.1)} & \underline{56.1(2.0)} & 74.6(2.9) & 76.7(2.5) \\
    DUPLEX & \textbf{53.0(2.5)} & \textbf{56.2(1.7)} & \textbf{77.9(0.6)} & \textbf{79.8(0.7)} \\ \bottomrule
    \end{tabular}}

    \end{minipage}\hfill
    \begin{minipage}[c]{0.47\linewidth}
    \centering
    \caption{Inductive node classification macro $F_1$ (\%) and micro $F_1$ (\%) with supervised training. The DUPLEX(EMS) represents DUPLEX with element-wise sum fusion.}
    \vspace{1ex}
    \label{table:NCI-fusion}
    \tabcolsep = 0.1cm
    \resizebox{0.8\linewidth}{!}{\begin{tabular}{lcccc}
    \toprule
    \multirow{2}{*}{\textbf{method}} & \multicolumn{2}{c}{\textbf{Citeseer}}   & \multicolumn{2}{c}{\textbf{Cora-ml}}    \\
        & mac. $F_1$   & mic. $F_1$   & mac. $F_1$   & mic. $F_1$   \\ \midrule
    DUPLEX* & \underline{68.3(1.3)} & \underline{74.0(0.7)} & \underline{85.0(2.6)} & \underline{87.3(2.5)} \\
    DUPLEX(EMS) & 57.1(1.0) & 62.7(1.0) & 66.6(2.4) & 67.8(1.8) \\
    DUPLEX & \textbf{71.7(0.7)} & \textbf{75.4(0.5)} & \textbf{85.9(0.8)} & \textbf{87.6(0.9)} \\ \bottomrule
    \end{tabular}}
\end{minipage}
\end{table*}

\subsection{Sensitivity Analysis}
\label{app:sensitivity}

To check the sensitivity of the initial loss weight $\lambda$ and the decay rate $q$ in the loss function (see Section~\ref{sec:ttloss}), we conduct experiments by varying the loss weight $\lambda$ from 0.0 to 1.0 and the decay rate $q$ in the set \{0, 1e-4, 1e-2\}. The results for link prediction and transductive node classification are presented in Figure~\ref{fig:sensi_edge}-\ref{fig:sensi_node}. Our findings indicate that incorporating a non-zero weight $\lambda > 0$ for the connection-aware loss results in improved performance compared to having no connection-aware loss ($\lambda = 0$). However, as the initial weight $\lambda$ increases, the performance of DUPLEX begins to decrease. Additionally, increasing the decay rate $q$ reduces the influence of different initial weights on the model's performance. These observations suggest that the connection-aware loss aids in the learning of superior embeddings. However, setting the initial loss weight too high can interfere with the primary classification tasks, as the optimization objectives for the two losses are not entirely aligned.

\begin{figure*}[h]
\begin{center}
\includegraphics[width=0.85
\textwidth]{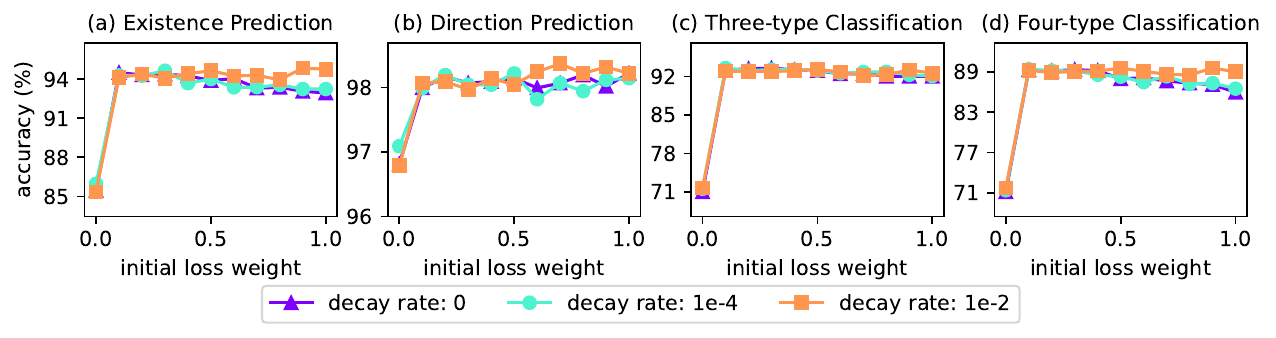}
\end{center}
\vspace{-3ex}
\caption{Link prediction accuracy (\%) under different initial loss weights and decay rates.}
\label{fig:sensi_edge}
\end{figure*}

\begin{figure*}[h]
\vspace{-0ex}
\begin{center}
\includegraphics[width=0.85
\textwidth]{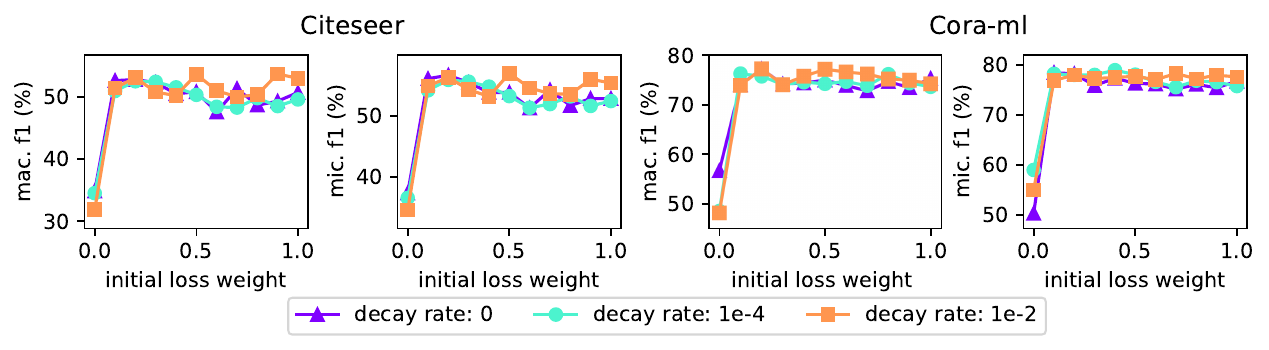}
\end{center}
\vspace{-3ex}
\caption{Self-supervised node classification macro $F_1$ (\%) and micro $F_1$ (\%) under different initial loss weights and decay rates.}
\label{fig:sensi_node}
\end{figure*}

\subsection{Distance Metrics}
\label{app:distance}

In Eq.~\eqref{eq:mul-decoder}, we employ the L1 distance as a measure to quantify the difference between the reconstructed matrix element and the corresponding element in the ground truth Hermitian adjacency matrix. However, it is worth noting that DUPLEX is robust to different distance metrics. In this section, we specifically utilize the L2 distance as an alternative distance metric and conduct link prediction experiments to compare the performance of DUPLEX using L2 distance. Table~\ref{table:abla-distance} illustrates the link prediction performance of DUPLEX utilizing L1 distance and L2 distance. From our observations, we find that DUPLEX with L1 distance generally performs slightly better than DUPLEX with L2 distance in most of the experiments conducted. Nevertheless, DUPLEX with L2 distance consistently demonstrates superior performance compared to other methods across all the experiments conducted.

\begin{table*}[h]
\caption{Link prediction ACC (\%) for four subtasks. Note that `EP' represents the `Existence Prediction' subtask, while `DP' for direction Prediction, `TP' for three-type classification and `FP' for four-type classification.}
\label{table:abla-distance}
\tabcolsep = 0.1cm
\begin{center}
\resizebox{0.6\linewidth}{!}{
\begin{tabular}{lcccccccc}
\toprule
\multirow{2}{*}{\textbf{\begin{tabular}[c]{@{}l@{}}Distance \\ Metric\end{tabular}}} & \multicolumn{4}{c}{\textbf{Citeseer}} & \multicolumn{4}{c}{\textbf{Cora}} \\
 & EP & DP & TP & FP & EP & DP & TP & FP \\ \midrule
DUPLEX(L1) & \textbf{95.7(0.5)} & \textbf{98.7(0.4)} & 94.8(0.2) & \textbf{91.1(1.0)} & \textbf{93.2(0.1)} & \textbf{95.9(0.1)} & \textbf{92.2(0.1)} & \textbf{88.4(0.4)} \\
DUPLEX(L2) & 95.0(0.6) & 98.6(0.5) & \textbf{95.4(0.6)} & 89.5(1.7) & 92.6(0.1) & 95.2(0.2) & 92.1(0.1) & 84.6(0.2) \\ \bottomrule
\toprule
\multirow{2}{*}{\textbf{\begin{tabular}[c]{@{}l@{}}Distance \\ metric\end{tabular}}} & \multicolumn{4}{c}{\textbf{Epinions}} & \multicolumn{4}{c}{\textbf{Twitter}} \\ 
 & EP & DP & TP & FP & EP & DP & TP & FP \\ \midrule
DUPLEX(L1) & 85.5(0.0) & \textbf{92.6(0.1)} & \textbf{88.9(0.0)} & 76.4(0.2) & 98.7(0.1) & \textbf{99.8(0.0)} & \textbf{99.2(0.1)} & 98.1(0.2) \\
DUPLEX(L2) & \textbf{86.5(0.1)} & 92.1(0.0) & 88.8(0.1) & \textbf{76.7(0.1)} & \textbf{98.9(0.1)} & 99.6(0.0) & 99.2(0.2) & \textbf{98.8(0.2)} \\ \bottomrule
\end{tabular}}
\end{center}
\vspace{-2ex}
\end{table*}

\end{document}